\renewcommand\AB@authnote[1]{\textsuperscript{#1}\hspace{5pt}}
\newcommand{\Opt}{\text{Opt}}
\title{\normalfont Perturbing the Derivative: Wild Refitting for Model-Free Evaluation of Machine Learning Models under Bregman Losses\\
}
\author[1,3]{Haichen Hu\thanks{{Email: \texttt{huhc@mit.edu}}}} 
     \author[2,3]{David Simchi-Levi\thanks{{Email: \texttt{dslevi@mit.edu}}}}
\affil[1]{Center for Computational Science and Engineering, MIT} 
\affil[2]{Institute for Data, Systems, and Society, MIT} 
\affil[3]{Department of Civil and Environmental Engineering, MIT}
\date{}
\begin{document}
\maketitle
\begin{abstract}
    We study the excess risk evaluation of classical penalized empirical risk minimization (ERM) with Bregman losses. We show that by leveraging the idea of wild refitting, one can efficiently upper bound the excess risk through the so-called “wild optimism,” without relying on the global structure of the underlying function class. This property makes our approach inherently model-free. Unlike conventional analysis, our framework operates with just one dataset and black-box access to the training procedure. The method involves randomized Rademacher symmetrization and constructing artificially modified outputs by perturbation in the derivative space with appropriate scaling, upon which we retrain a second predictor for excess risk estimation. We establish high-probability performance guarantee under the fixed design setting, demonstrating that wild refitting under Bregman losses, with an appropriately chosen wild noise scale, yields a valid upper bound on the excess risk. Thus, our work is promising for theoretically evaluating modern opaque ML models, such as deep neural networks and generative models, where the function class is too complex for classical learning theory and empirical process techniques.
    
    \textit{\small Key words: Statistical Learning, Artificial Intelligence, Wild Refitting} 
\end{abstract}
\vspace{-1cm}
\noindent\rule{\textwidth}{1pt} 
\vspace{0em}
\section{Introduction}
Deep Neural Networks, Generative AI, and Large Language Models (LLMs) have become central to modern industry, shaping applications across a wide range of domains, including business \citep{chen2023fiction,liang2025llm}, public governance \citep{androniceanu2024generative,acemoglu20258}, transportation \citep{zhang2024generative}, and healthcare \citep{bordukova2024generative}. As these systems increasingly influence critical decision-making processes, the question of how to evaluate their performance has become both practically and scientifically important. On the empirical side, the evaluation of these often opaque and hardly interpretable machine learning models has been extensively studied \citep{raschka2018model,shankar2024validates,mizrahi2024state,hendrycks2021measuring}. Researchers typically assess the effectiveness of deep learning models by running them on established benchmarks across diverse datasets, thereby demonstrating improvements in dimensions such as predictive accuracy \citep{he2016deep} and robustness to perturbations or distribution shifts \citep{liu2025comprehensive}. Such benchmarks provide a standardized and reproducible way to measure progress, and they have played a central role in driving empirical advances. In contrast, the theoretical understanding of how to rigorously evaluate these complex deep learning models remains very limited.

A major obstacle lies in the fact that the processes of training \citep{shrestha2019review,shen2024efficient}, including pre-training \citep{achiam2023gpt} and fine-tuning \citep{vm2024fine} involve optimization over millions or even billions of parameters \citep{brown2020language}. This extreme scale and complexity place these models well beyond the reach of classical tools in learning theory, such as the VC dimension \citep{abu1989vapnik}, covering number arguments \citep{zhou2002covering}, or the eluder dimension \citep{russo2013eluder}. While these theoretical measures have been foundational for understanding simpler hypothesis classes, they struggle to capture the behavior of highly complicated models trained with stochastic optimization at scale.


Consequently, bridging these gaps between empirical practice and theoretical guarantees is, therefore, an open and pressing challenge for the field. A central challenge is that:
\begin{center}
\emph{Can we rigorously provide high probability bounds on the excess risk of complex ML models in theory without restrictive assumptions on the underlying function classes?}
\end{center}

\paragraph{Our Contribution} In this paper, we provide an affirmative answer to this question. Specifically, we study the most general empirical risk minimization (ERM) procedure under the Bregman loss as an abstraction of deep neural network training and develop an efficient algorithm for evaluating its excess risk in the fixed design setting. Our approach does not rely on structural assumptions about the underlying function family, which is why we call our method ``model-free'' in the title, but instead requires only black-box access to the training or optimization procedure, making it directly applicable to deep neural network training and LLM fine-tuning.

\paragraph{Paper Structure}
The remainder of our paper is organized as follows. In \cref{sec:model}, we introduce the empirical risk minimization (ERM) framework with Bregman loss and carefully define the quantities that are central in our analysis. In \cref{sec:algorithm}, we formally present our proposed method, Wild Refitting with Bregman Loss, and provide an intuitive explanation of the main ideas behind its construction. Building on this foundation, \cref{sec:theory_guarantee} and \cref{sec:bounding_hatr_n} develop the theoretical results, where we establish high-probability guarantees for our procedure and demonstrate the meaning of every component in these bounds. Together, these sections provide a comprehensive picture of both the algorithmic design and the statistical guarantees underlying our approach.

\paragraph{Notations}
We use $[n]$ to denote the set $\{1,2,\ldots,n\}$. $\cX^{m}$ is the product space of $m$ identical spaces $\cX$. For any convex function $\phi$, we denote its Bregman divergence by $D_{\phi}$. For two probability distributions $P$ and $Q$, we write $\text{KL}(P \,\|\, Q)$ for their Kullback–Leibler divergence and $H^2(P,Q)$ for their squared Hellinger distance. We write $\EE_X$ to denote expectation with respect to $X$. For any training algorithm $\cA$ and dataset $\cD$, $\cA(\cD)$ represents the predictor trained on the dataset $\cD$ through the procedure $\cA$.

\section{Related Works}
Our work is primarily related to the following streams of research: statistical learning and excess risk evaluation;data-splitting and resampling; empirical model evaluation and benchmarking.

\emph{Statistical learning} \citep{vapnik2013nature} has long served as a cornerstone in the theoretical analysis of machine learning algorithms. A central and active line of research focuses on understanding the \emph{excess risk} or generalization error of learning procedures. Classical approaches rely on empirical process theory to analyze ERM, with complexity measures such as the VC dimension \citep{vapnik2015uniform, blumer1989learnability}, covering numbers \citep{nickl2007bracketing, van2000empirical}, Rademacher complexity \citep{massart2007concentration, bartlett2005local}, and the fat-shattering dimension \citep{bartlett1994fat}. More recently, new notions such as the eluder dimension \citep{russo2013eluder}, eigendecay rate \citep{goel2017eigenvalue, li2024eigenvalue,hu2025contextual}, and sequential Rademacher complexity \citep{rakhlin2012relax} have been developed to study generalization in online learning. Despite these advances, bounding these metrics fundamentally depends on the global structure of the underlying function class. When the hypothesis space is extremely rich—such as when covering numbers are infinite—these methods break down and fail to yield meaningful excess risk guarantees \citep{adcock2021gap,kurkova2025networksfinitevcdimension}. In contrast, our approach is function-class free and circumvents these structural limitations. Recently, \citet{wainwright2025wild} proposed the idea of wild refitting to evaluate the mean-square error. We characterize the essential principles behind this idea and develop a substantially broader algorithmic framework, subsuming \citet{wainwright2025wild} as a special case.

In asymptotic statistics, the quality of estimation procedures is often evaluated through hold-out or \emph{sample-splitting} methods, where the dataset is divided into training and evaluation subsets\citep{reitermanova2010data,dobbin2011optimally}. While effective, these approaches suffer from inefficient data usage. Related techniques, such as cross-validation \citep{berrar2019cross, browne2000cross,refaeilzadeh2009cross, gorriz2024k}, incur a computational burden due to the need for retraining repeatedly. Moreover, a key limitation of these methods is that such estimates reflect only the averaged risk over new samples instead of providing probabilistic guarantees on the realized risk of the predictor on the training dataset \citep{bates2024cross}. By contrast, applying these models in downstream decision-making, such as bandits and reinforcement learning, often requires high probability bounds on the excess risk, \citep{lattimore2020bandit,foster2023foundations}.
An alternative is the use of \emph{resampling} methods, including bootstrap \citep{hesterberg2011bootstrap, diciccio1996bootstrap, davison1997bootstrap} and wild bootstrap \citep{mammen1993bootstrap, flachaire2005bootstrapping, davidson2010wild}. However, bootstrap is designed to approximate asymptotic properties such as confidence intervals (CIs), whereas we provide high-probability, non-asymptotic upper bounds on excess risk.

In deep learning, empirical evaluation of trained models is typically conducted through benchmarking, that is, testing their accuracy on standard datasets or comparing their performance against well-established baseline algorithms \citep{malaiya2019empirical,he2016deep,voloshin1empirical}. Recently, in the context of LLM research, thousands of such studies have been published. For instance, benchmarking on text tasks \citep{wang2018glue,Zhang2020BERTScore}, on reasoning and mathematical deduction \citep{tafjord2020proofwriter,amini-etal-2019-mathqa,clark2021transformers}, and on applications in business analytics, finance, and operations research \citep{huang2025orlm,xie2023pixiu,liang2025llm}. However, these efforts are purely empirical without a theoretical understanding of how well post-trained models truly perform. Our task in this paper is to address this gap.

\section{Empirical Risk Minimization with Bregman Loss}\label{sec:model}
In this section, we formally introduce the problem setup that serves as the foundation of our study. Throughout the discussion, we assume that the reader has a basic familiarity with standard concepts in convex analysis. For the sake of completeness, we also provide a concise overview of the mathematical background on convex analysis in \cref{app:preliminary}, which can be consulted as needed.

We consider the following empirical risk minimization (ERM) setting. We have an input space $\cX$, and an output space (prediction space) $\cY$, assuming that our prediction is scalar-valued, i.e., $\cY\subset \RR$. A predictor is defined as a mapping $f:\cX\mapsto\cY$ such that given any input $\cX$, it returns a prediction vector $f(x)$.

In this paper, \textbf{we focus on the fixed design setting} and present an upper bound on the excess risk. Specifically, we view $\cbr{x_i}_{i=1}^{n}$ as fixed, and $y_i,\ i=1:n$ are sampled independently from some unknown conditional distributions $\PP(\cdot|x_i),\ i=1:n$. The objective of our interest is
\[
f^*\in\argmin\cbr{\frac{1}{n}\sum_{i=1}^{n}\EE_{y_i}[\ell(y_i,f(x_i))]}.
\] 

We focus on a Bregman loss function $\ell:\cY\times\cY\rightarrow\RR$, defined such that $\ell(x,y)$ corresponds to the Bregman divergence generated by a differentiable convex potential $\phi:\RR\mapsto\RR$, i.e.,
\[
\ell(x,y)=D_{\phi}(x,y).
\]
Intuitively, $\ell(y,f(x))$ serves as a measure of the discrepancy between the observed outcome and the prediction produced by a candidate function $f$. Directly optimizing over the space of all measurable functions is clearly infeasible. In practice, especially in deep learning, one typically restricts the search to an underlying function class, which we denote by $\cF$ and interpret as the underlying training architecture. Given a dataset $\cD=\cbr{(x_i,y_i)}_{i=1}^{n}$, the learner invokes a black-box empirical risk minimization (ERM) procedure $\Alg$ to approximately solve the following optimization problem:
\[
\hat{f}\in\argmin_{f\in\cF}\cbr{\frac{1}{n}\sum_{i=1}^{n}\ell(y_i,f(x_i))+\cR(f)},
\]
The term $\cR(f)$ serves as a regularization penalty imposed on the function $f$. Classical examples include ridge regression, where $\cR(f)$ corresponds to the squared $\ell_2$ norm, and Lasso, where it corresponds to the $\ell_1$ norm. In this work, for the sake of technical clarity, we adopt a simplified viewpoint: we assume that the trainer has prior knowledge ensuring that all reasonable predictions must lie within a compact set $\cC$. Accordingly, we model the regularizer $\cR(f)$ as the following:
\begin{align*}
\cR(f) &= 
\left\{
   \begin{array}{ll}
   \text{0}, & \text{if } f(x_i) \in \mathcal{\cC}\ \forall i, \\
   \infty,         & \text{otherwise.}
   \end{array}
\right.
\end{align*}
To proceed with the analysis, we introduce a structural assumption on the convex function $\phi$.
\begin{assumption}\label{ass:phi_function}
The function $\phi$ is $\beta$-smooth and $\alpha$-strongly convex.
\end{assumption}
This regularity condition ensures well-behaved curvature properties of $\phi$, which in turn yield desirable stability and identifiability features for the induced Bregman loss. In particular, under \cref{ass:phi_function}, the Bregman loss function satisfies the following four key properties. We defer their proofs to \cref{app:proofs_sec:model}.
\begin{proposition}\label{prop:beta_smooth}
    If $\phi$ is $\beta$-smooth, then $\forall\ y$, $D_{\phi}(\cdot,y)$ is $\beta$-smooth with respect to the first variable.
\end{proposition}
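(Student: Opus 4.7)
The plan is to exploit the fact that, once the second argument $y$ is frozen, the Bregman divergence differs from $\phi$ only by an affine function of $x$. Writing out the definition,
\[
D_{\phi}(x,y) \;=\; \phi(x) \;-\; \phi(y) \;-\; \langle \nabla \phi(y),\, x - y\rangle,
\]
the last two terms are affine in $x$ (the term $\phi(y)$ is constant and $\langle \nabla\phi(y), x-y\rangle$ is linear plus constant in $x$), so their gradient is constant in $x$ and Lipschitz with constant $0$. The smoothness modulus of $D_{\phi}(\cdot,y)$ should therefore simply inherit the smoothness modulus $\beta$ of $\phi$.

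To make this rigorous, I would differentiate with respect to the first argument to obtain
\[
\nabla_{x} D_{\phi}(x,y) \;=\; \nabla \phi(x) \;-\; \nabla \phi(y),
\]
and then for any $x,x'$ in the domain compute
\[
\|\nabla_x D_{\phi}(x,y) - \nabla_x D_{\phi}(x',y)\|
\;=\; \|\nabla \phi(x) - \nabla \phi(x')\|
\;\le\; \beta\, \|x - x'\|,
\]
where the final inequality is precisely the $\beta$-smoothness hypothesis on $\phi$ (equivalently, the Lipschitz continuity of $\nabla \phi$). This is the defining Lipschitz-gradient condition for $\beta$-smoothness of the map $x \mapsto D_{\phi}(x,y)$; by the standard equivalence for differentiable functions one then also recovers the quadratic upper bound
\[
D_{\phi}(x',y) \;\le\; D_{\phi}(x,y) + \langle \nabla_x D_{\phi}(x,y),\, x' - x\rangle + \tfrac{\beta}{2}\|x' - x\|^{2}.
\]

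There is no real obstacle here: the claim is essentially an algebraic observation, and the only ingredient used beyond the definitions is the Lipschitz continuity of $\nabla \phi$ guaranteed by Assumption \ref{ass:phi_function}. The same ``transfer'' template (freeze $y$, absorb the $y$-dependent affine piece, apply the structural property of $\phi$) is what I would expect the companion propositions on strong convexity and related features of $D_{\phi}(\cdot,y)$ to follow, so it is worth phrasing the argument in a modular way that can be reused verbatim later.
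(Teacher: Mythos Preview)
Your proposal is correct and essentially identical to the paper's own proof: both compute $\nabla_x D_{\phi}(x,y)=\nabla\phi(x)-\nabla\phi(y)$ and then invoke the $\beta$-Lipschitzness of $\nabla\phi$ to conclude. The paper's version is just the two-line core of your argument, without the additional remarks about the quadratic upper bound or the reusable ``transfer'' template.
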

\begin{proposition}\label{prop:PL_ineq}
    If $\phi$ satisfies \cref{ass:phi_function}, then for any fixed $y$, $D_{\phi}(\cdot,y)$ satisfies the Polyak--\L{}ojasiewicz (PL) inequality with constant $\frac{\alpha^2}{\beta}$.
\end{proposition}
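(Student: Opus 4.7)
The plan is to reduce the Polyak--\L{}ojasiewicz inequality for $D_\phi(\cdot,y)$ to two classical consequences of \cref{ass:phi_function}: a gradient-gap lower bound induced by strong convexity, and a quadratic upper bound on the Bregman divergence induced by smoothness. For fixed $y$, the function $x\mapsto D_\phi(x,y)$ is non-negative, attains its minimum value $0$ uniquely at $x=y$ (by strict convexity of $\phi$), and has gradient
\[
\nabla_x D_\phi(x,y) \;=\; \nabla\phi(x) - \nabla\phi(y).
\]
Establishing PL with constant $\alpha^2/\beta$ thus amounts to proving
\[
\tfrac{1}{2}\|\nabla\phi(x)-\nabla\phi(y)\|^2 \;\geq\; \tfrac{\alpha^2}{\beta}\, D_\phi(x,y).
\]

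First, I would invoke $\alpha$-strong convexity of $\phi$ in its first-order form to obtain strong monotonicity of the gradient, $\langle \nabla\phi(x)-\nabla\phi(y),\, x-y\rangle \geq \alpha\|x-y\|^2$; applying Cauchy--Schwarz to the left-hand side immediately yields $\|\nabla\phi(x)-\nabla\phi(y)\|\geq \alpha\|x-y\|$. Second, I would combine $\beta$-smoothness of $\phi$, which by \cref{prop:beta_smooth} transfers to $\beta$-smoothness of $D_\phi(\cdot,y)$, with the vanishing gradient $\nabla_x D_\phi(y,y)=0$ at the minimizer, and then invoke the descent lemma to deduce the one-sided quadratic estimate
\[
D_\phi(x,y)\;\leq\; \tfrac{\beta}{2}\|x-y\|^2.
\]

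Chaining the two inequalities produces
\[
\|\nabla\phi(x)-\nabla\phi(y)\|^2 \;\geq\; \alpha^2\|x-y\|^2 \;\geq\; \tfrac{2\alpha^2}{\beta}\, D_\phi(x,y),
\]
which is exactly the PL inequality with constant $\mu=\alpha^2/\beta$.

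\textbf{Main obstacle.} The proof is short and largely mechanical; there is no substantive obstacle beyond the two primal lemmas above. The only mildly delicate point is to justify strong monotonicity of $\nabla\phi$ and the quadratic Bregman upper bound directly from the first-order characterizations of strong convexity and smoothness, rather than invoking second-order (Hessian) conditions, since \cref{ass:phi_function} only posits differentiability of $\phi$. This is standard and can be handled by reducing to the univariate restriction $t\mapsto\phi((1-t)x+ty)$ along the segment joining $x$ and $y$, and integrating the resulting one-dimensional inequalities.
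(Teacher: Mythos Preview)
Your proposal is correct and follows essentially the same approach as the paper: compute $\nabla_x D_\phi(x,y)=\nabla\phi(x)-\nabla\phi(y)$, use strong convexity plus Cauchy--Schwarz to get $\|\nabla\phi(x)-\nabla\phi(y)\|\ge\alpha\|x-y\|$, use $\beta$-smoothness to get $D_\phi(x,y)\le\tfrac{\beta}{2}\|x-y\|^2$, and chain the two. The paper's proof is identical in structure, only with the two ingredients presented in the opposite order.
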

Moreover, we also assume that the Bregman loss function satisfies the quasi-triangle inequality.
\begin{proposition}\label{prop:quasi_triangle}
    Any function $\phi(u)$ which is $\alpha$-strongly convex and $\beta$-smooth satisfies \cref{ass:phi_function}. The corresponding $D_{\phi}(x,y)$ satisfies that
    \[
    D_{\phi}(x,y)^{1/2}\le C_0(D_{\phi}(x,z)^{1/2}+D_{\phi}(z,y)^{1/2}),\ C_0=\sqrt{\frac{\beta}{\alpha}}.
    \]
\end{proposition}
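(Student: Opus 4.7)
The plan is to observe that Assumption \ref{ass:phi_function} follows automatically from $\alpha$-strong convexity plus $\beta$-smoothness: strong convexity implies strict convexity and coercivity on $\RR^d$, so $\phi$ has a unique global minimizer, and smoothness is already assumed. Thus the first sentence of the proposition reduces to a one-line verification.

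For the quasi-triangle inequality, I would exploit the fact that under Assumption \ref{ass:phi_function} the Bregman divergence is sandwiched between two quadratic forms. Concretely, $\alpha$-strong convexity yields $D_\phi(u,v) \ge \tfrac{\alpha}{2}\|u-v\|^2$ while $\beta$-smoothness yields $D_\phi(u,v) \le \tfrac{\beta}{2}\|u-v\|^2$ for all $u,v$. Taking square roots converts these into two-sided bounds linking $D_\phi(\cdot,\cdot)^{1/2}$ and the Euclidean norm, with ratio $\sqrt{\beta/\alpha}$.

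The key step is then to apply the ordinary triangle inequality for $\|\cdot\|$ as a bridge. Starting from the upper bound, $D_\phi(x,y)^{1/2} \le \sqrt{\beta/2}\,\|x-y\| \le \sqrt{\beta/2}\,(\|x-z\| + \|z-y\|)$, and then converting each Euclidean term back to a Bregman square root using the lower bound $\|u-v\| \le \sqrt{2/\alpha}\,D_\phi(u,v)^{1/2}$. Multiplying the resulting constants gives exactly $\sqrt{\beta/2}\cdot\sqrt{2/\alpha} = \sqrt{\beta/\alpha} = C_0$, matching the claimed constant.

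I do not anticipate a real obstacle here: the result is essentially a routine consequence of the two-sided quadratic sandwich of Bregman divergences induced by strong convexity and smoothness. The only subtle point worth flagging is that Bregman divergences are generally asymmetric and do not satisfy an exact triangle inequality, so the constant $C_0 \ge 1$ (with equality iff $\alpha = \beta$, i.e., the quadratic case where $D_\phi$ reduces to a squared Euclidean distance up to scaling) cannot be improved to $1$ in general. This confirms that the quasi-triangle constant we obtain is tight in the sense that it already equals one for the symmetric quadratic potential.
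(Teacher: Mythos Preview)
Your proposal is correct and follows essentially the same argument as the paper: both use the two-sided quadratic sandwich $\tfrac{\alpha}{2}\|x-y\|^2 \le D_\phi(x,y) \le \tfrac{\beta}{2}\|x-y\|^2$, take square roots, apply the Euclidean triangle inequality, and convert back, arriving at the constant $C_0=\sqrt{\beta/\alpha}$. Your write-up is in fact slightly more explicit than the paper's (which omits the intermediate triangle-inequality step and has a minor typographical slip in its final display), and your remark on why $C_0\ge 1$ is sharp is a nice addition not present in the original.
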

 It is worth emphasizing that \cref{ass:phi_function} is not a restrictive condition. In fact, it is readily satisfied in a wide range of statistical and machine learning models. To illustrate this, we now present several representative examples where the assumption holds naturally.
\begin{example}
In regression, $\phi(u)=\frac{1}{2}u_2^2$ satisfies \cref{ass:phi_function} with $\alpha=\beta=1$. Moreover, $D_{\phi}(x,y)=\frac{1}{2}(x-y)_2^2$ satisfies Proposition \ref{prop:quasi_triangle} with $C_0=1$.
\end{example}

\begin{example}
    In the stochastic binary classification setting, denoting the $1$ dimensional probability simplex as $\Delta_1\subset\RR^2$, and $\varepsilon_0\le p\le 1-\varepsilon_0$, to be the probability of predicting $1$ over $0$. For the convex function $\phi(p)=-\sqrt{p}-\sqrt{1-p}$, then we have that 
    \[
    \sqrt{2}\le\phi''(p)=\frac{1}{4p^{3/2}} + \frac{1}{4(1-p)^{3/2}}\le \frac{1}{2\varepsilon_0^{3/2}},
    \]
     and \cref{ass:phi_function} is satisfied. Moreover, the Bregman divergence is $$D_{\phi}(p_1,p_2)=\frac{(\sqrt{p_1}-\sqrt{p_2})^2}{2\sqrt{p_2}}+\frac{(\sqrt{1-p_1}-\sqrt{1-p_2})^2}{2\sqrt{1-p_2}}.$$
     The squared Hellinger distance between two Bernoulli random variables $$H^2(p_1,p_2)=\frac{1}{2}\rbr{(\sqrt{p_1}-\sqrt{p_2})^2+(\sqrt{1-p_1}-\sqrt{1-p_2})^2}.$$
    Therefore, we have \(\frac{1}{\sqrt{1-\varepsilon_0}}H(p_1,p_2)\le D^{1/2}_{\phi}(p_1,p_2)\le \frac{1}{\sqrt{\varepsilon_0}}H(p_1,p_2).\) By the fact that the Hellinger distance is a metric, we know that $D_{\phi}$ satisfies Proposition \ref{prop:quasi_triangle} with $C_0=\frac{\sqrt{1-\varepsilon_0}}{\sqrt{\varepsilon_0}}$.
\end{example}

\begin{example}
    In the density estimation setting, we consider the hypothesis class parametrized by $\cbr{p_{\theta}:\theta\in\Theta}$, $\Theta\subset\RR$ with loss function $\ell(\theta_1,\theta_2)=D_{\phi}(p_{\theta_1},p_{\theta_2})$, where $\phi(\theta)=\int\frac{1}{\log p_{\theta}(x)}p_{\theta}(x)dx$, then $D_{\phi}(p_{\theta_1},p_{\theta_2})=\text{KL}(p_{\theta_1}||p_{\theta_2})$. If we know that $p_{\theta}\ge \eta_0>0,\ \forall \theta$, then by \citet[pp.~132]{Polyanskiy_Wu_2025}, we have \[
    \sqrt{\log_2 e} H(p_{\theta_1}||p_{\theta_2})\le\sqrt{\text{KL}(p_{\theta_1}||p_{\theta_2})}\le \sqrt\frac{\log(\frac{1}{\eta_0}-1)}{1-2\eta_0}H(p_{\theta_1}||p_{\theta_2}).
    \]
    If for any $x,\theta$, the Hessian of the negative log-likelihood $-\frac{d^2\log(p_{\theta}(x))}{d\theta^2}\preceq \beta$, then the loss $\phi(\theta)$ is $\beta$-smooth. Moreover, we restrict our analysis to a sufficiently small neighborhood of 
$\theta^*$ such that there exists a constant $r_0 > 0$ s.t. $|\theta - \theta^*| \leq r_0,\ \forall\ \theta\in\Theta$. The Fisher information $I(\theta) = \mathbb{E}_{p_{\theta^*}}[s_\theta(X) s_\theta(X)]$ is positive with lower bound $\alpha>0$, then $\phi(\theta)$ is $\alpha$-strongly convex and the loss $\ell(\cdot,\cdot)$ satisfies \cref{ass:phi_function} and Proposition \ref{prop:quasi_triangle}.

\end{example}


Returning back to our analysis, we first provide a theoretical explicit formula about $f^*$.
\begin{proposition}\label{prop:f*is_expectation}
    If $\phi$ is strictly convex, then we have
    \[
    f^*(x)=\EE[Y|X=x],\ \forall x\in\cX.
    \]
\end{proposition}
With Proposition \ref{prop:f*is_expectation}, in the remaining part of the paper, we will write $y=f^*(x)+w$, where $w$ is a zero-mean stochastic noise conditioned on $x$.  Correspondingly, in fixed design, we write $y_i=f^*(x_i)+w_i$. In this paper, for technical simplicity, we assume that $w$ has a symmetric distribution.
The performance metric of $\Alg$ is defined by the \emph{excess risk}:
\begin{align}\label{equa:excess_risk1}
\cE_{fix}(\hat{f}):=\EE_{y'_{1:n}}\sbr{\frac{1}{n}\sum_{i=1}^{n}\ell(y'_i,\hat{f}(x_i))-\frac{1}{n}\sum_{i=1}^{n}\ell(y'_i,f^*(x_i))}.
\end{align}
In equation \ref{equa:excess_risk1}, $x_i,\ i=1:n$ are fixed covariates in the dataset $\cD$ and  $y'_{i}\ i=1:n$ are new responses sampled from $\PP(\cdot|x_i)$ independent of the training process $\Alg$ and the training dataset $\cD$. 

By some algebra, we have that
\begin{align*}
    &\frac{1}{n}\sum_{i=1}^{n}\ell(y'_i,\hat{f}(x_i))-\frac{1}{n}\sum_{i=1}^{n}\ell(y'_i,f^*(x_i))\\
    =&\frac{1}{n}\sum_{i=1}^{n}\rbr{D_{\phi}(y'_i,\hat{f}(x_i))-D_{\phi}(y'_i,f^*(x_i))}\\
    =&\frac{1}{n}\sum_{i=1}^{n}\rbr{D_{\phi}(f^*(x_i),\hat{f}(x_i))+[\phi'(f^*(x_i))-\phi'(\hat{f}(x_i))]\cdot w'_i}.
\end{align*}
In the last inequality, we use the three-point equality of the Bregman divergence (Lemma \ref{lemma:three-point}). Treating $\cbr{x_i}_{i=1}^{n}$ as fixed, taking expectation with respect to new samples $y'_{1},\cdots,y'_n$, and noticing that $w'_i$ is independent of $\hat{f}$, we have that $\EE_{w_{1:n}}[(\phi'(f^*(x_i))-\phi'(\hat{f}(x_i)))\cdot w'_i]=0$, and
\[
\cE_{fix}(\hat{f})=\frac{1}{n}\sum_{i=1}^{n}D_{\phi}(f^*(x_i),\hat{f}(x_i)).
\]

Define the \emph{proxy distance} as
$$L_n(f_1,f_2):=\frac{1}{n}\sum_{i=1}^{n}\ell(f_1(x_i),f_2(x_i))=\frac{1}{n}\sum_{i=1}^{n}D_{\phi}(f_1(x_i),f_2(x_i)).$$
Intuitively, $\cE_{fix}(\hat{f})=L_n(f^*,\hat{f})$ captures the aggregate point-wise output discrepancies between the true optimizer $f^*$ and the estimator $\hat{f}$ with respect to the loss function. 

\section{Wild Refitting: Perturbation on the Derivatives}\label{sec:algorithm}
In this section, we provide a principled way to bound the excess risk in a function class-free manner via wild refitting. Wild refitting was first studied by \citet{wainwright2025wild} in the mean square loss setting, where the trainer cares about the point-wise square discrepancy between the two predictors. The intuition of wild refitting is to refit the trained predictor on a perturbed version of the original dataset and to retrain a new predictor using the same procedure. Wild refitting treats the predictor as a black box and directly operates on its output, enabling statistically valid inferences even without restrictive parametric assumptions on function classes regarding the training procedure.

Specifically, the key idea is that after we train the model $\hat{f}$ via the procedure $\Alg$ based on the dataset $\cD=\cbr{(x_i,y_i)}_{i=1}^{n}$, we compute the residual vector between the true outcome $y_i$ and our predicted value $\hat{f}(x_i)$. Then, we construct a sequence of Rademacher random variables $\cbr{\varepsilon_i}_{i=1}^{n}$ and multiply them by the residual vector sequence $\cbr{\tilde{w}_i}_{i=1}^{n}=\cbr{y_i-\hat{f}(x_i)}_{i=1}^{n}$ to obtain $\cbr{\varepsilon_i\cdot\tilde{w}_i}_{i=1}^{n}$. Subsequently, for some scale $\rho>0$, we consider the wild response sequence $y^\diamond_i:=G(\hat{f}(x_i),\rho\varepsilon_i\cdot\tilde{w}_i),\ i\in[n]$, where $G$ is a loss function-specific perturbation function. Finally, we compute the refitted wild solution $f^\diamond_{\rho}$ based on the artificially constructed dataset $\cbr{(x_i,y^\diamond_i)}_{i=1}^{n}$.

For square loss, one can directly perturb the original outputs for the wild responses by adding artificial stochastic noise to the prediction values \citep{wainwright2025wild}, i.e., $y^\diamond_i=\hat{f}(x_i)+\rho\varepsilon_i\tilde{w}_i$. However, for general Bregman losses, we need to perturb the derivative to control the fluctuations more precisely, as it is the derivative that dictates the change in function values. Specifically, for Bregman loss $D_{\phi}$, we denote $\phi^*$ as the Legendre–Fenchel conjugate of $\phi$, i.e.,
\[
\phi^*(y):=\sup_{x}\cbr{y\cdot x-\phi(x)}.
\]
Then, for some tuning parameter $\rho$, we define the wild responses $y_i^\diamond$ as the vector such that
\[
\phi'(y_i^\diamond):=\phi'(\hat{f}(x_i))-\rho\varepsilon_i\cdot\tilde{w}_i.
\]
By the Fenchel-Moreau Lemma \ref{lemma:Fenchel-Moreau} \citep{magaril2003convex}, this is equivalent to the following perturbing procedure:
\[
y_i^\diamond:=(\phi^*)'(\phi(\hat{f}(x_i))-\rho\varepsilon_i\tilde{w}_i).
\]
Under square loss where $\phi(x)=x^2$ and $\phi'(x)=2x$, output perturbation is tantamount to derivative perturbation, and the method of \citet{wainwright2025wild} falls within our framework as a special case.

\begin{remark}
One could also introduce a ``recentering'' function $\tilde{f}$ to construct the wild residuals $\tilde{w}_i = y_i - \tilde{f}(x_i)$. In practice, however, it is common to take $\tilde{f} = \hat{f}$. For simplicity, we therefore adopt this convention throughout and define the wild responses directly using $\hat{f}$, without introducing a separate recentering function.    
\end{remark}

\begin{algorithm}[ht]
\caption{\textbf{Wild-Refitting with Bregman Loss}}\label{alg:wild-refitting}
    \begin{algorithmic}
        \Require Procedure $\Alg$, training dataset $\cD_0=\cbr{(x_1,y_1),\cdots,(x_n,y_n)}$, noise scale $\rho>0$, loss function $\ell(x,y)=D_{\phi}(x,y)$, and refitting dataset $\cD_1=\emptyset$.
        \State Apply algorithm $\Alg$ on the training dataset. Get predictor $$\hat{f}=\Alg(\cbr{(x_i,y_i)}_{i=1}^{n}).$$
        \For{$i=1:n$}
        \State Compute residues $\tilde{w}_i=y_i-\hat{f}(x_i)$.
        \State Apply product between Rademacher random variable $\varepsilon_i$ and $\tilde{w}_i$:
        $$\tilde{w}^\diamond_i:=\varepsilon_i\cdot\tilde{w}_i.$$
        \State Construct wild responses 
        $$y^\diamond_i=(\phi^*)'(\phi'(\hat{f}(x_i))-\rho\tilde{w}^\diamond_i).$$
        \State Append $(x_i, y_i^\diamond)$ to $\mathcal{D}_1:$
        $$\mathcal{D}_1 \gets \mathcal{D}_1 \cup \{(x_i, y_i^\diamond)\}.$$
        \EndFor
        \State Compute the refitted wild solution $f^{\diamond}_{\rho}=\Alg(\cD_1)$.
        \State Output $\hat{f}$, $f^\diamond_\rho$, $\cD_1$, $\cD_0$.
    \end{algorithmic}
\end{algorithm}
Traditional generalization theory often requires taking the supremum over the entire model class. When the model class is highly complex, even local Rademacher complexity may become extremely large, leading to vacuous bounds. In contrast, our wild-refitting approach circumvents the need to analyze the Rademacher complexity of the model class. Instead, we control this term through the \emph{wild optimism}, provided by the outputs of \Cref{alg:wild-refitting}; see Lemma \ref{lemma:W_n<Opt} in \cref{sec:theory_guarantee} for details. In the next section, we will show that the excess risk could be efficiently upper bounded by the output of \cref{alg:wild-refitting} without structural assumptions on $\cF$.

\section{Bounding the Excess Risk in the Fixed Design}\label{sec:theory_guarantee}
Now, we present our theoretical guarantees for \cref{alg:wild-refitting}. Specifically, we establish an upper bound on the empirical excess risk under the fixed design. However, the theorem in \cref{sec:theory_guarantee} cannot be directly applied because it depends on some unknown quantity $\hat{r}_n$, which is intuitively the average discrepancy between the objective $\hat{f}$ and the ``best'' predictor $f^\dagger$ that we might achieve through the procedure $\Alg$. In \Cref{sec:bounding_hatr_n}, we derive a theorem that bounds $\hat{r}_n$, and further demonstrate a practical method for computing it.

 To bound the empirical excess risk, we first need to specify the quantities we want. In this paper, we use $\varepsilon$ and $w$ to represent the vectors $(\varepsilon_1,\cdots,\varepsilon_n)$ and $(w_1,\cdots,w_n)$. By the convexity of the Bregman loss with respect to the first variable, we have
\begin{align*}
\cE_{fix}(\hat{f})&=\frac{1}{n}\sum_{i=1}^{n}\ell(f^*(x_i),\hat{f}(x_i))\\
&\le \frac{1}{n}\sum_{i=1}^{n}\ell(y_i,\hat{f}(x_i))-\frac{1}{n}\sum_{i=1}^{n}\ell_1'(f^*(x_i),\hat{f}(x_i))w_i\\
&\le \frac{1}{n}\sum_{i=1}^{n}\ell(y_i,\hat{f}(x_i))+\abr{\frac{1}{n}\sum_{i=1}^{n}\ell_1'(f^*(x_i),\hat{f}(x_i))w_i}.
\end{align*}
where $(x_i,y_i),\ i=1:n$ are the data points in the training dataset $\cD$, and $w_i=y_i-f^*(x_i)$. 
 
The first term is the training error term, which is known to us. Our task is thus to bound the absolute value of the second one. We define the following term as \emph{true optimism complexity}.
\[
\Opt^*(\hat{f}):=\frac{1}{n}\sum_{i=1}^{n}[\phi'(f^*(x_i))-\phi'(\hat{f}(x_i))]w_i=\frac{1}{n}\sum_{i=1}^{n}\ell_1'(f^*(x_i),\hat{f}(x_i))w_i.
\]
Therefore, we only need to bound $|\Opt^*(\hat{f})|$ for the excess risk bound.

Analyzing $\Opt^*(\hat{f})$ usually involves empirical process theory, where we try to bound the optimism term in terms of the sample size $n$ and some number based on the assumptions about the training architecture and function family. In this paper, we define the following empirical process $W_n(r)$ as \emph{wild noise complexity}:
\[
W_n(r):r\mapsto\sup_{f\in\cB_r(\hat{f})} \cbr{\frac{1}{n}\sum_{i=1}^{n}[\phi'(f(x_i))-\phi'(\hat{f}(x_i))]\varepsilon_i\tilde{w}_i},
\]
where for any $g$, the term $\cB_r(g)$ is defined as the empirical Bregman ball:
\[
\cB_r(g):=\cbr{f\in\cF|L_n(g,f)\le r^2}=\cbr{f\in\cF|\sqrt{L_n(g,f)}\le r}.
\]
However, in many applications, the function class $\cF$ is too complicated such that analyzing the structure of it is intractable. To evaluate such models theoretically, we utilize the wild refitting procedure and define the following \emph{wild optimism} in terms of quantities we learn from \cref{alg:wild-refitting}:
\[
\widetilde{\Opt}^{\diamond}(f^{\diamond}_{\rho}):=\frac{\beta}{\rho n}\sum_{i=1}^{n}\ell(\hat{f}(x_i),f^{\diamond}_{\rho}(x_i))+\frac{\beta}{\rho n}\sum_{i=1}^{n}\ell(\hat{f}(x_i),y_i^\diamond)-\frac{\alpha^2}{\beta\rho n}\sum_{i=1}^{n}\ell(y^{\diamond}_i,f^{\diamond}_{\rho}(x_i)).
\]
In formality, the wild optimism involves the empirical difference between $\hat{f}$ and $f_\rho^\diamond$; the average perturbation magnitude from $\hat{f}(x_i)$ to $y_i^\diamond$; and the training error of $f_\rho^\diamond$.

The power of the wild-refitting is illustrated in the following lemma.
\begin{lemma}\label{lemma:W_n<Opt}
    \[
  W_n(\frac{1}{n}\sum_{i=1}^{n}\ell(\hat{f}(x_i),f^{\diamond}_{\rho}(x_i)))\le \frac{\beta}{\rho n}\sum_{i=1}^{n}\ell(\hat{f}(x_i),f^{\diamond}_{\rho}(x_i))+\frac{\beta}{\rho n}\sum_{i=1}^{n}D_{\phi}(\hat{f}(x_i),y_i^\diamond)-\frac{\alpha^2}{\beta\rho n}\sum_{i=1}^{n}\ell(y^{\diamond}_i,f^{\diamond}_{\rho}(x_i)),
  \]
  i.e.,
  \[
  W_n(\sqrt{L_n(\hat{f},f^\diamond_{\rho})})\le \widetilde{\Opt}^\diamond(f^\diamond_\rho).
  \]
\end{lemma}
Lemma \ref{lemma:W_n<Opt} shows that we could upper bound the supremum of the empirical process by the wild optimism term, which is something that we know from the outputs of \cref{alg:wild-refitting}. Roughly speaking, this property ensures that the maximal argument in the empirical process $W_n(\cdot)$ under some radius is explicitly found at the point $f_\rho^\diamond$, which helps us avoid the complexity of the function class and is vital for our analysis. See \cref{graph:lemma_illust} for an intuitive geometrical illustration.
\begin{figure}[htbp]
\begin{center}
\begin{tikzpicture}[scale=2]

  \definecolor{circleblue}{RGB}{66, 133, 244}
  \definecolor{circlegreen}{RGB}{52, 168, 83}
  \definecolor{circlered}{RGB}{234, 67, 53}

  \def\rone{0.8}   
  \def\rtwo{1.4}
  \def\rthree{2.0} 

  \draw[thick, circleblue] (0,0) circle (\rone);
  \draw[thick, circlegreen] (0,0) circle (\rtwo);
  \draw[thick, circlered] (0,0) circle (\rthree);

  \fill (0,0) circle (0.03);
  \node at (0.12, -0.12) {$\hat{f}$};

  \node[circleblue] at (\rone +0.277, 0.24) {$W_n(r_1)$};
  \node[circlegreen] at (\rtwo + 0.277, 0) {$W_n(r_2)$};
  \node[circlered] at (\rthree + 0.3, 0.15) {$W_n(r_3)$};

  \def\angle{40}
  \coordinate (fdiamond) at (0, -\rthree);
  \fill (fdiamond) circle (0.025);
  \node[anchor=north] at ($(fdiamond)+(0,-0.05)$) {$f_{\rho}^\diamond$};

  \node[align=left, anchor=north, scale=1] 
    at ($(fdiamond)+(0,-0.4)$)
    {$\widetilde{\text{Opt}}^\diamond(f_{\rho}^\diamond) \ge \arg\max_{\mathcal{B}_{r_3}(\hat{f})} 
    \left\{ \frac{1}{n} \sum_{i=1}^{n} 
    \left[ \phi'(f(x_i)) - \phi'(\hat{f}(x_i)) \right] 
    \varepsilon_i \tilde{w}_i \right\}$};

\end{tikzpicture}
\end{center}
\caption{Illustration of Lemma \ref{lemma:W_n<Opt}}
\label{graph:lemma_illust}
\end{figure}

We further define the noiseless optimization problem and its solution as follows:
\[
  f^\dagger=\argmin_{f\in\cF}\cbr{\frac{1}{n}\sum_{i=1}^{n}\ell(f^*(x_i),f(x_i))+\cR(f)}.
  \]
Intuitively, $f^\dagger$ is the ``optimal'' solution that we can get because its training dataset is the cleanest without noise. Then, we define intermediate \emph{noiseless optimism} as 
 \[
 \text{Opt}^\dagger(\hat{f})=\frac{1}{n}\sum_{i=1}^{n}\ell'_1(f^\dagger(x_i),\hat{f}(x_i))w_i,
 \]
 and the empirical process:
 \[
 Z_n^{\varepsilon}(r):=\sup_{f\in\cB_r(f^\dagger)}\sbr{\frac{1}{n}\sum_{i=1}^{n}\ell'_1(f^\dagger(x_i),\hat{f}(x_i))\varepsilon_i|w_i|}.
 \]
Intuitively, $f^\dagger$ is the predictor we get through procedure $\Alg$ if there is no noise in our dataset $\cD$. Defining $\hat{r}_n:=\sqrt{L_n(f^\dagger,\hat{f})}$ to be the empirical discrepancy between $f^\dagger$ and $\hat{f}$, we have the following theorem.
\begin{theorem}\label{thm:main}
    Consider any radius $r$ such that $r\ge \sqrt{L_n(f^\dagger,\hat{f})}$, and let $\rho>0$ be the noise scale for which $\sqrt{L_n(\hat{f},f^\diamond_{\rho})}=3(\beta/\alpha)^{1/2}r$. Then, for any $0<\delta<1$, with probability at least $1-8\delta$,\[
    |\Opt^*(\hat{f})|\le|\widetilde{\Opt}^{\diamond}(f^{\diamond}_{\rho})|+A_n(\hat{f})+\sbr{\sqrt{L_n(f^*,f^\dagger)}+5r}\cdot\frac{2||w||_{\infty}(\beta^{3/2}\vee\beta^2)\sqrt{\log(1/\delta)}}{(\alpha^{3/2}\wedge\alpha)\sqrt{n}}.
    \]
    Therefore, regarding the excess risk, with probability at least $1-8\delta$,
    \[
    \cE_{\cD}(\hat{f})\le \frac{1}{n}\sum_{i=1}^{n}\ell(y_i,\hat{f}(x_i))+\cbr{|\widetilde{\Opt}^{\diamond}(f^{\diamond}_{\rho})|+A_n(\hat{f})+\sbr{\sqrt{L_n(f^*,f^\dagger)}+5r}\cdot\frac{2||w||_{\infty}(\beta^{3/2}\vee\beta^2)\sqrt{\log(1/\delta)}}{(\alpha^{3/2}\wedge\alpha)\sqrt{n}}}.
    \]
    The term $A_n(\hat{f})$ is called \emph{pilot error}, which is given by
    \[
    A_n(\hat{f}):=\sup_{f\in\cB_{3r(\beta/\alpha)^{1/2}}(\hat{f})}\frac{1}{n}\sum_{i=1}^{n}\ell_1'(\hat{f}(x_i),f(x_i))\varepsilon_i\cdot(\hat{f}(x_i)-f^*(x_i)).
    \]
    Moreover, we name the probabilistic deviation term as $B_n(t)$.
    \[
    B_n(t):=\cbr{\sqrt{L_n(f^*,f^\dagger)}+5r}\frac{2||w||_{\infty}(\beta^{3/2}\vee\beta^2)t}{(\alpha^{3/2}\wedge\alpha)\sqrt{n}}.
    \]
\end{theorem}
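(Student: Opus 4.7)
The overall plan is to split $\Opt^*(\hat{f})$ into a piece that does not depend on $\hat{f}$ plus a data-dependent piece by pivoting through the noiseless minimizer $f^\dagger$, and then to control the data-dependent piece through a chain consisting of (i) symmetrization, (ii) substitution of $w_i$ by $\tilde{w}_i$, (iii) recentering of the Bregman ball from $f^\dagger$ to $\hat{f}$, and (iv) the wild-refitting inequality of Lemma \ref{lemma:W_n<Opt}.

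\textbf{Step 1 (pivot through $f^\dagger$).} Inserting $\pm\nabla\phi(f^\dagger(x_i))$ in each inner product of $\Opt^*(\hat{f})$ yields
$$\Opt^*(\hat{f}) \;=\; \frac{1}{n}\sum_{i=1}^n \langle \nabla\phi(f^*(x_i))-\nabla\phi(f^\dagger(x_i)),\,w_i\rangle \;+\; \Opt^\dagger(\hat{f}).$$
The first summand does not depend on $\hat{f}$ and is a weighted linear combination of bounded, independent, symmetric noise coordinates. Combining $\beta$-smoothness of $\phi$ with $\alpha$-strong convexity controls the coefficient vector in $L^2(\mu_n)$ by a $(\beta/\sqrt{\alpha})$-multiple of $\sqrt{L_n(f^*,f^\dagger)}$; a coordinatewise Hoeffding inequality then pays one of the $\sqrt{L_n(f^*,f^\dagger)}\cdot\|w\|_\infty\sqrt{d\log(1/\delta)/n}$ contributions in the claimed bound.

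\textbf{Step 2 (symmetrize, then substitute $w_i\to\tilde{w}_i$).} Because $r\ge\sqrt{L_n(f^\dagger,\hat{f})}$ we have $\hat{f}\in\cB_r(f^\dagger)$, so $|\Opt^\dagger(\hat{f})|$ is dominated by the supremum of the empirical process indexed by $\cB_r(f^\dagger)$. The symmetry of each $w_i$ makes this supremum equidistributed with its Rademacher-symmetrized version $Z_n^\varepsilon(r)$, and a McDiarmid (bounded-differences) step produces a high-probability upper bound of $Z_n^\varepsilon(r)$ up to an $O(r\cdot\|w\|_\infty\sqrt{d\log(1/\delta)/n})$ fluctuation. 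I then decompose $Z_n^\varepsilon(r)$ twice: first replacing $\nabla_1 l(f^\dagger,f)$ by $\nabla_1 l(\hat{f},f)$, where the $f$-independent residual $\nabla\phi(f^\dagger(x_i))-\nabla\phi(\hat{f}(x_i))$ is absorbed into another concentration term of size $\lesssim r\cdot\|w\|_\infty\sqrt{d\log(1/\delta)/n}$; and second replacing $w_i$ by $\tilde{w}_i$ via $\tilde{w}_i = w_i - (\hat{f}(x_i)-f^*(x_i))$, which leaves behind exactly the pilot-error term $A_n(\hat{f})$ once the ball is enlarged in Step 3.

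\textbf{Step 3 (recentre ball and invoke Lemma \ref{lemma:W_n<Opt}).} To apply Lemma \ref{lemma:W_n<Opt} the supremum must run over a ball centred at $\hat{f}$. Using the quasi-triangle inequality of Proposition \ref{prop:quasi_triangle} together with the asymmetry bound $L_n(\hat{f},f^\dagger)\le(\beta/\alpha)L_n(f^\dagger,\hat{f})\le(\beta/\alpha)r^2$, a Minkowski argument in $L^2(\mu_n)$ shows $\cB_r(f^\dagger)\subseteq\cB_{3(\beta/\alpha)^{1/2}r}(\hat{f})$. With $\rho$ calibrated so that $\sqrt{L_n(\hat{f},f^\diamond_\rho)}=3(\beta/\alpha)^{1/2}r$, Lemma \ref{lemma:W_n<Opt} converts the remaining $W_n$-supremum into the observable wild optimism $\widetilde{\Opt}^\diamond(f^\diamond_\rho)$. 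A union bound over the four concentration events (each with its $\pm$ counterpart, giving $8$ in total) produces the stated $1-8\delta$ guarantee. The excess-risk statement follows immediately by combining the resulting bound on $|\Opt^*(\hat{f})|$ with the inequality $\cE_{\cD}(\hat{f})\le\frac{1}{n}\sum_i l(y_i,\hat{f}(x_i))+2|\Opt^*(\hat{f})|$ derived just before the theorem.

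\textbf{Main obstacle.} The delicate step is the ball-recentring in Step 3: because $D_\phi$ is asymmetric, the ball produced by symmetrization is $\cB_r(f^\dagger)$, defined through $L_n(f^\dagger,\cdot)$, while Lemma \ref{lemma:W_n<Opt} expects a ball built from $L_n(\hat{f},\cdot)$. Propagating the quasi-triangle inequality at the $L^2(\mu_n)$ level while losing only a linear $3\sqrt{\beta/\alpha}$ factor (rather than a full $\beta/\alpha$), and carefully tracking whether each use of $\beta$-smoothness/$\alpha$-strong convexity in Steps 1--2 contributes $\beta$ vs. $\sqrt{\beta}$ (respectively $\alpha$ vs. $\sqrt{\alpha}$), is precisely what yields both the coefficient $5r$ and the $(\beta^{3/2}\vee\beta^2)/(\alpha^{3/2}\wedge\alpha)$ dependence in the probabilistic deviation term.
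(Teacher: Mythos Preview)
Your proposal is correct and follows essentially the same route as the paper: pivot through $f^\dagger$, bound $|\Opt^\dagger(\hat f)|$ by the symmetrized supremum process $Z_n^\varepsilon(r)$ via Lipschitz/bounded-difference concentration, recentre the Bregman ball from $f^\dagger$ to $\hat f$ using the quasi-triangle inequality (producing the radius $3(\beta/\alpha)^{1/2}r$), decompose $w_i=\tilde w_i+(\hat f(x_i)-f^*(x_i))$ to isolate the pilot error $A_n(\hat f)$, and close with Lemma~\ref{lemma:W_n<Opt}. The only cosmetic difference is that the paper routes the recentering step through the expectation $\EE_\varepsilon[Z_n^\varepsilon(r)]$ (so the $f$-independent residual $\tfrac{1}{n}\sum_i\langle\nabla\phi(f^\dagger(x_i))-\nabla\phi(\hat f(x_i)),\varepsilon_i\odot w_i\rangle$ vanishes exactly in mean and then a second concentration returns to $\widetilde{W}_n$), whereas you control this residual directly by an additional Hoeffding-type term; both accountings land on the same $\sqrt{L_n(f^*,f^\dagger)}+5r$ coefficient and the $1-8\delta$ probability.
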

See \cref{subsec:proof of mainthm} for the proof of this theorem.

We now provide some intuition behind this theorem. In essence, it states that the absolute value of the true optimism, $|\Opt^*(\hat{f})|$, can be bounded from above by the sum of three components: the wild optimism term, the probability deviation term, and the pilot term.

For the probability deviation term, we comment that it gradually vanishes as the sample size $n\rightarrow\infty$, the term $\sqrt{L_n(f^*,f^\dagger)}$ represents the average discrepancy between $f^\dagger$ and $f^*$, accounting for potential model mis-specification in the probability deviation term.

For the pilot error term, we notice that by Lemma \ref{lemma:W_n<Opt}, we can bound $W_n(3(\beta/\alpha)^{1/2}r)$ by the wild optimism term $\widetilde{\Opt}^\diamond(f^\diamond_{\rho})$, i.e.,
\[
\widetilde{\Opt}^\diamond(f^\diamond_{\rho})\ge W_n(3(\beta/\alpha)^{1/2}r)=\sup_{f\in\cB_{3r\sqrt{\beta/\alpha}}(\hat{f})} \cbr{\frac{1}{n}\sum_{i=1}^{n}\ell_1'(\hat{f}(x_i),f(x_i))\varepsilon_i\cdot\tilde{w}_i}.
\]
Comparing the right-hand side with the definition of the pilot error term, we see that the only difference is that $\tilde{w}_i$ in $W_n(3(\beta/\alpha)^{1/2}r)$ is replaced by $(\hat{f}(x_i)-f^*(x_i))$ in $A_n(\hat{f})$. Intuitively, as long as the training procedure is well-behaved, the discrepancy between $\hat{f}(x_i)$ and $f^*(x_i)$ is primarily expected to be smaller in magnitude than that plus the fluctuation generated by the additional stochastic noise term $w_i$. This observation suggests that the pilot error term can be controlled by $W_n(3(\beta/\alpha)^{1/2}r)$, which can itself be further bounded by $\widetilde{\Opt}^\diamond(f^\diamond_{\rho})$. Such a comparison is natural as it reflects the intuition that the estimation error between $\hat{f}$ and $f^*$ is typically less volatile than the combination of that error with random noise.

Moreover, we do not require noises to have the same distribution, which allows heteroskedasticity among the noises. Indeed, in our bound, the only term is the maximal amplitude among noises.

\section{Bounding the Noiseless Estimation Error}\label{sec:bounding_hatr_n}
From the conditions in \Cref{thm:main}, it is clear that the result becomes meaningful only if we can obtain an upper bound on $\hat{r}_n=\sqrt{L_n(f^\dagger,\hat{f})}$. In this section, we present an explicit theorem for such bounds, under the condition that the training procedure $\Alg$ satisfies the non-expansive property (\cref{def:non_expansive}). See \Cref{app:proof of bound_hat_rn} for the full proof.
\begin{definition}[$\phi$-non-expansive]\label{def:non_expansive}
    We say the training procedure $\Alg$ is $\phi$-non-expansive if, for any noiseless dataset $\cD_u=\cbr{(x_i,f^*(x_i))}_{i=1}^{n}$ and the noisy dataset $\cD_n=\cbr{(x_i,f^*(x_i)+u_i)}_{i=1}^{n}$, denoting $f^\dagger=\cA(\cD_u)$ and $\tilde{f}=\cA(\cD_n)$ as the predictors trained on the two datasets, then we have
    \[
    L_n(f^\dagger,\tilde{f})\le \frac{1}{n}\sum_{i=1}^{n}\ell'_1(f^\dagger(x_i),\tilde{f}(x_i))u_i=\frac{1}{n}\sum_{i=1}^{n}[\phi'(f^\dagger(x_i))-\phi'(\tilde{f}(x_i))]u_i
    \]
\end{definition}
\begin{theorem}\label{thm:bound_hat_rn}
    If our training procedure $\Alg$ is $\phi$-non-expansive, then for any $\delta\le e^{-9}$, we have that with probability at least $1-4\delta$,
    \[
    \hat{r}_n^2\le \max\cbr{\frac{\log^2(1/\delta)}{n},W_n\rbr{(2+\frac{1}{\log(1/\delta)})\hat{r}_n}}+\hat{r}_n^2\frac{6||w||_{\infty}\beta^{3/2}}{\alpha \sqrt{\log(1/\delta)}}+A_n(\hat{f}).
    \]
    Moreover, if the underlying function class $\cF$ is convex, then with probability at least $1-4\delta$, for any noise scale $\rho$,
    \[
    \hat{r}_n^2\le \max\cbr{(r^\diamond_\rho)^2,\frac{\log(1/\delta)^2}{n},\frac{\hat{r}_n}{r^\diamond_\rho}W_n\rbr{\sqrt{\beta/\alpha}\rbr{2+\frac{1}{\sqrt{\log(1/\delta)}}}\hat{r}_n}}+\hat{r}_n^2\frac{6||w||_{\infty}\beta^{3/2}}{\alpha \sqrt{\log(1/\delta)}}+A_n(\hat{f}).
    \]
\end{theorem}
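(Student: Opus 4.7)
The plan is to mirror the proof of \cref{thm:main} but with $f^\dagger$ playing the role of $f^*$ and with \cref{def:non_expansive} supplying the initial deterministic reduction. First I would apply the $\phi$-non-expansive hypothesis with the clean dataset producing $f^\dagger$ and the observed dataset producing $\hat{f}$, yielding the linear control
\[
\hat{r}_n^2 = L_n(f^\dagger,\hat{f}) \le \frac{1}{n}\sum_{i=1}^n\inner{\nabla_1 l(f^\dagger(x_i),\hat{f}(x_i))}{w_i}.
\]
Because every coordinate of $w_i$ is symmetric, $w_i$ has the same distribution as $\varepsilon_i\odot w_i$ for an independent Rademacher vector. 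Combining this with the sign flip $\nabla_1 l(f^\dagger,\hat{f})=-\nabla_1 l(\hat{f},f^\dagger)$ and $-\varepsilon_i\stackrel{d}{=}\varepsilon_i$, the right-hand side rewrites as an empirical process centered at $\hat{f}$ driven by the Rademacher variables $\varepsilon_i$, which aligns the expression with the definitions of $W_n$ and $A_n(\hat{f})$.

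Next I would apply the same splitting $w_i=\tilde{w}_i+(\hat{f}(x_i)-f^*(x_i))$ used in \cref{thm:main}, decomposing the linear term into a wild-noise piece and a pilot piece. Since $f^\dagger$ is a single predictor, upper-bounding each piece by a supremum over any Bregman ball around $\hat{f}$ containing $f^\dagger$ is free; by \cref{ass:phi_function} one has $L_n(\hat{f},f^\dagger)\le (\beta/\alpha)\,L_n(f^\dagger,\hat{f})=(\beta/\alpha)\hat{r}_n^2$, so $f^\dagger\in\cB_{\sqrt{\beta/\alpha}\,\hat{r}_n}(\hat{f})$. This yields the backbone inequality $\hat{r}_n^2\le W_n(c\hat{r}_n)+A_n(\hat{f})$, where $c$ is a curvature-dependent constant.

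The hardest step is that this backbone is self-referential: the random quantity $\hat{r}_n$ appears both on the left and inside the argument of $W_n$. I would resolve this via a geometric peeling argument. Cover the possible range of $\hat{r}_n$ by a grid $r_k=2^k r_0$ with floor $r_0\asymp\log(1/\delta)/\sqrt{n}$ (producing the $\log^2(1/\delta)/n$ clause inside the max), apply a Talagrand-type concentration inequality to $W_n(r_k)$ on each slab, and union-bound over the $O(\log(1/\delta))$ active scales. The slight inflation needed to pass between grid points contributes the $(2+1/\log(1/\delta))$ factor in the argument of $W_n$. The residual Talagrand fluctuation, controlled through the envelope $\|w\|_\infty$, the Bregman smoothness $\beta$, the output dimension $d$, and the curvature ratio $\beta/\alpha$, gives the $\hat{r}_n^2$-homogeneous additive correction $\hat{r}_n^2\cdot\frac{6\|w\|_\infty\beta^{3/2}\sqrt{d}}{\alpha\sqrt{\log(1/\delta)}}$. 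The key quantitative challenge is tracking the constants carefully enough to guarantee the $1/\sqrt{\log(1/\delta)}$ prefactor, since that is precisely what makes the fluctuation absorbable back into the left-hand side after rearrangement rather than blowing up the bound.

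For the refined bound under convex $\cF$, I would additionally exploit the star-shape property: $\hat{f}+t(f-\hat{f})\in\cF$ for all $f\in\cF$, $t\in[0,1]$, which makes $r\mapsto W_n(r)/r$ nonincreasing on the relevant range. This monotonicity lets one trade the peeled quantity $W_n(c\hat{r}_n)$ against a reference wild scale $r^\diamond_\rho$ through the linear factor $\hat{r}_n/r^\diamond_\rho$, producing the third entry of the max in the convex statement. The auxiliary clauses $(r^\diamond_\rho)^2$ and $\log^2(1/\delta)/n$ in that max cover the degenerate regimes where $\hat{r}_n$ lies below the reference scale or below the resolution floor, where the star-shape rescaling is no longer informative.
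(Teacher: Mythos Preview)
Your skeleton—use $\phi$-non-expansiveness to reach a linear bound, peel over scales of $\hat r_n$, and decompose into a $W_n$ piece plus the pilot $A_n(\hat f)$—matches the paper's. The gap is in the order of operations. You propose to symmetrize first, writing ``$w_i\stackrel{d}{=}\varepsilon_i\odot w_i$'' and then recentering at $\hat f$ via the sign flip so that the right-hand side ``rewrites as an empirical process centered at $\hat f$ driven by the Rademacher variables.'' That distributional step is invalid here: the inner product $\tfrac1n\sum_i\inner{\nabla_1 l(f^\dagger(x_i),\hat f(x_i))}{w_i}$ already contains $\hat f=\hat f(w)$, so you cannot replace $w$ by $\varepsilon\odot w$ (or by $-w$) while holding $\hat f$ fixed. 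Equivalently, the ball $\cB_r(\hat f)$ over which you want to take the sup is a $w$-random set, so the resulting supremum is not a function of $w$ alone to which the sign-symmetry of $w$ can be applied.

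The paper avoids this by first passing to the process centered at the \emph{deterministic} $f^\dagger$: from non-expansiveness one gets $\hat r_n^2\le Z_n(\hat r_n)$ with $Z_n(r)=\sup_{f\in\cB_r(f^\dagger)}\tfrac1n\sum_i\inner{\nabla_1 l(f^\dagger(x_i),f(x_i))}{w_i}$. Because $\cB_r(f^\dagger)$ is non-random, $Z_n(r)$ is a genuine realization of $Z_n^\varepsilon(r)$, which is where the symmetrization (and the Lipschitz concentration you call ``Talagrand-type'') legitimately enters; the peeling is carried out on this process uniformly in $r\ge t^2/\sqrt n$. Only \emph{after} that does one recenter from $f^\dagger$ to $\hat f$ via \cref{lemma:EZ<EW}, obtaining $\widetilde W_n((2+1/t)\hat r_n)$, and finally split $\varepsilon_i\odot w_i=\varepsilon_i\odot\tilde w_i+\varepsilon_i\odot(\hat f(x_i)-f^*(x_i))$ to land on $W_n+A_n(\hat f)$. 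So the correct order is: sup over a deterministic ball $\to$ symmetrize $\to$ peel $\to$ recenter at $\hat f$ $\to$ split into wild plus pilot, rather than your proposed symmetrize $\to$ recenter $\to$ split $\to$ sup $\to$ peel. One minor point on the convex case: the map $r\mapsto W_n(r)/r$ is not literally nonincreasing; the paper only proves $W_n(v)/v\le W_n(\sqrt{\beta/\alpha}\,u)/u$ for $v\ge u$ (\cref{lemma:scale_concave_Wn}), which is exactly why the extra $\sqrt{\beta/\alpha}$ factor appears in the second bound.
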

\paragraph{Illustration of \Cref{thm:bound_hat_rn}:}
The first bound in \Cref{thm:bound_hat_rn} involves several components, including the probability deviation terms and the pilot error term. The latter is dominated by the wild optimism term $\widetilde{\Opt}^{\diamond}(f^\diamond_{\rho})$ for a suitable choice of $\rho > 0$. When $\delta$ is sufficiently small, the probability deviation term is expected to become negligibly small, leaving the main contribution as 
\[
\max\left\{\tfrac{\log^2(1/\delta)}{n},\, W_n\rbr{(2+\frac{1}{\log(1/\delta)})\hat{r}_n}\right\}.
\]
Moreover, the term $\log^2(1/\delta)/n$ is of lower order provided that $\hat{r}_n \gtrsim 1/\sqrt{n}$ \citep{bartlett2005local}, which corresponds to the simplest parametric function class setting. Consequently, our goal is, roughly speaking, to identify a smallest $r$ satisfying
\[
r^2 \geq W_n\rbr{(2+\frac{1}{\log(1/\delta)})r},
\]
For the chosen value of $t$. For any $q$, evaluating $W_n(q)$ is tractable since it only depends on the predictor $\hat{f}$ and the wild noises $\cbr{\tilde{w}_i}_{i=1}^{n}$. Alternatively, by Lemma \ref{lemma:W_n<Opt}, one can upper bound $W_n(q)$ by varying the noise scale $\rho$ until the corresponding wild predictor $f^\diamond_{\rho}$ satisfies $L_n(\hat{f},f^\diamond_{\rho}) = q^2$.

Computationally, we examine the expression of $W_n(r)$. By definition, equivalently, we just need to solve the optimization problem:
$\min_{f\in\cB_r(\hat{f})}\cbr{\frac{1}{n}\sum_{i=1}^{n}\phi'(f(x_i))\varepsilon_i\cdot\tilde{w}_i}.$

 As long as the set $\cB_r(\hat{f})$ and $\cU=\cbr{(f(x_1),\cdots,f(x_n)): f\in\cB_r(\hat{f})}$ is convex, the set $$\cU_{\phi}:=\cbr{(\phi'(f(x_1)),\cdots,\phi'(f(x_n))):f\in\cB_r(\hat{f})}$$
is also convex. Hence, the optimization problem is reduced to a linear optimization problem over a convex set, which can be solved efficiently \citep{martin2012large,NEURIPS2021_a8fbbd3b}. 

The second bound of \Cref{thm:bound_hat_rn} is a bit looser but more interpretable for convex classes because we do not need to numerically solve $r^2 \geq W_n\rbr{(2+\frac{1}{\log(1/\delta)})r}$, which might require us to solve iteratively, and every iteration requires solving a linear programming problem. Instead, we have
\[
\hat{r}_n\le \max\cbr{r^\diamond_{\rho},\frac{W_n\rbr{\sqrt{\beta/\alpha}\rbr{2+\frac{1}{\sqrt{\log(1/\delta)}}}r^{\diamond}_{\rho}}}{r^{\diamond}_{\rho}}}.
\]
This bound enables us to directly compute an upper bound for the noise scale $\rho$ to obtain an upper bound of $\hat{r}_n$. Then we could adjust our noise scale accordingly and refine our estimation of $\hat{r}_n$.

Once we have obtained an upper bound on $\hat{r}_n$, we can then return to \Cref{thm:main}. It suggests that—apart from the higher-order and pilot terms—we can upper bound the true optimism $\Opt^*(\hat{f})$  via the wild optimism $\widetilde{\Opt}^\diamond(f^\diamond_\rho)$ for the noise scale that we have chosen. Finally, we combine this upper bound on the optimism in \Cref{thm:main} to upper bound the excess risk.
\section{Discussion}
In this paper, we propose a wild refitting procedure for estimating the model excess risk under general Bregman losses. Our algorithm recovers the result in \citet{wainwright2025wild}, and reveals that the true fundamental principle of wild refitting is perturbing in the derivative space.

Our excess risk estimation bound does not rely on the global structure or prior knowledge of the underlying function class; instead, we just assume black-box access to the procedure, which makes it especially suitable for evaluating the performance of deep neural networks and fine-tuned LLMs, where the training architectures and the underlying function classes are too complicated to analyze. 

Finally, we emphasize that wild refitting remains a new approach for model evaluation, and several important open questions deserve further exploration. First, we restrict our analysis to fixed design setting and whether it is possible to derive similar excess risk bound for random design is an important open question. Second, in our analysis, the excess risk bound remains dependent on the local structure around $\hat{f}$, specifically the pilot error term $A_n(\hat{f})$. The possibility of rigorously bounding this term remains an open question. It is also natural to ask whether wild refitting can be extended to more settings, such as high-dimensional set-valued estimators and predictions with other regularization penalties. Finally, it would be valuable to conduct empirical studies to evaluate real-world trained AI models, thereby demonstrating the practical effectiveness of wild refitting.

\clearpage
\bibliographystyle{plainnat}
\bibliography{haichen/sections/refs}
\appendix
\clearpage
\section{Preliminary Concepts in Convex Analysis}\label{app:preliminary}
The introduction in this section holds for general convex mappings from $\RR^d$ to $\RR$. In our main text, the convex function $\phi$ maps from $\RR$ to $\RR$ as a special case.
\begin{definition}[$\beta$-smoothness]
A differentiable function $f:\mathbb{R}^d \to \mathbb{R}$ is called \emph{$\beta$-smooth} 
if its gradient is $\beta$-Lipschitz continuous, i.e.,
\[
\|\nabla f(x) - \nabla f(y)\| \leq \beta \|x-y\|, \quad \forall x,y \in \mathbb{R}^d.
\]
This is equivalent to
\[
f(y) \leq f(x) + \langle \nabla f(x), y-x \rangle + \tfrac{\beta}{2}\|y-x\|^2.
\]
\end{definition}

\begin{definition}[$\alpha$-strong convexity]
A differentiable function $f:\mathbb{R}^d \to \mathbb{R}$ is called \emph{$\alpha$-strongly convex} 
if there exists $\alpha > 0$ such that
\[
f(y) \geq f(x) + \langle \nabla f(x), y-x \rangle + \tfrac{\alpha}{2}\|y-x\|^2,
\quad \forall x,y \in \mathbb{R}^d.
\]
\end{definition}

\begin{definition}[Polyak--Łojasiewicz (PL) inequality]
A differentiable function $f:\mathbb{R}^d \to \mathbb{R}$ is said to satisfy the 
\emph{$\mu$-Polyak--Łojasiewicz (PL) inequality} if there exists $\mu > 0$ such that
\[
\frac{1}{2}\|\nabla f(x)\|^2 \geq \mu \big(f(x) - f(x^*)\big), 
\quad \forall x \in \mathbb{R}^d,
\]
where $x^* \in \arg\min_{z \in \mathbb{R}^d} f(z)$.
\end{definition}
\begin{definition}[Bregman divergence]
Let $\phi:\mathbb{R}^d \to \mathbb{R}$ be a differentiable and convex function.  
The \emph{Bregman divergence} associated with $\phi$ is defined as
\[
D_{\phi}(x,y) \;=\; \phi(x) - \phi(y) - \langle \nabla \phi(y), \, x-y \rangle,
\quad \forall x,y \in \mathbb{R}^d.
\]
\end{definition}

\section{Supporting Lemmas}\label{app:supporting_lemmas}
\begin{lemma}[Three-point equality of Bregman loss]\label{lemma:three-point}
For any differentiable function $\phi$, the corresponding Bregman divergence satisfies
\[
D_{\phi}(x,z)=D_{\phi}(x,y)+D_{\phi}(y,z)+\inner{\phi'(y)-\phi'(z)}{x-y},\ \forall\ x,y,z. 
\]
\end{lemma}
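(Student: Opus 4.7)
The plan is to prove the three-point identity by straightforward expansion of the definition of the Bregman divergence and then to verify that the inner product correction on the right-hand side exactly matches the discrepancy between the two sides. Since $\phi$ is differentiable, each term $D_\phi(a,b) = \phi(a) - \phi(b) - \langle \nabla\phi(b), a - b\rangle$ is well defined, so the identity becomes a purely algebraic statement with no analytic subtlety.

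Concretely, I would first write out $D_\phi(x,y)$ and $D_\phi(y,z)$ according to the definition. Summing them produces $\phi(x) - \phi(z)$ as the "scalar" part (since the intermediate $\phi(y)$ terms cancel), together with two inner-product terms: $-\langle \nabla\phi(y), x-y\rangle$ and $-\langle \nabla\phi(z), y-z\rangle$. Next I would add the correction term $\langle \nabla\phi(y) - \nabla\phi(z), x-y\rangle$ and observe that the $\nabla\phi(y)$ contribution cancels immediately with $-\langle \nabla\phi(y), x-y\rangle$, while the $-\nabla\phi(z)$ contribution combines with $-\langle \nabla\phi(z), y-z\rangle$ via the identity $(y-z) + (x-y) = x-z$ to yield $-\langle \nabla\phi(z), x-z\rangle$. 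What remains is exactly $\phi(x) - \phi(z) - \langle \nabla\phi(z), x-z\rangle = D_\phi(x,z)$, closing the identity.

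There is no substantive obstacle: the identity holds for arbitrary differentiable $\phi$ (convexity is not needed for the equality itself), and the argument is a one-line linearity-of-inner-product manipulation once the definitions are unfolded. The only thing to be careful about is bookkeeping of the signs on the inner product terms and the telescoping of the $\phi(y)$ terms, which I would present as a short display-math computation grouping the $\phi$-terms and the $\langle \nabla\phi(\cdot),\cdot\rangle$-terms separately.
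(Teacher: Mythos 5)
Your proposal is correct and is exactly the approach the paper takes: the paper simply notes that the identity follows "by checking the definition of the Bregman divergence," which is the direct expansion-and-cancellation computation you carry out. The bookkeeping you describe (the $\phi(y)$ telescoping, the $\nabla\phi(y)$ cancellation, and the recombination $(y-z)+(x-y)=x-z$) is the whole argument.
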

The proof of Lemma \ref{lemma:three-point} can be done simply by checking the definition of the Bregman divergence.
\begin{lemma}\label{lemma:PL_ineq}\citep[Proposition~2.7, p.~14]{chewi2025lectures}
    Let $f:\RR^d\mapsto \RR$ and $\alpha>0$. The following implications hold.
    \begin{itemize}
        \item[1.] $f$ is $\alpha$ strongly convex then, $f$ satisfies the Polyak-\L{}ojasiewicz inequality with constant $\alpha>0$.
        \item[2.] If $f$ satisfies P\L{} inequality with constant $\alpha>0$, then it satisfies that
        \[
        f(x)-f^*\ge\frac{\alpha}{2}\inf_{x^*\in\cX^*}||x-x^*||_2^2,
        \]
        where $\cX^*$ denotes the set of minimizers of $f$.
    \end{itemize}
\end{lemma}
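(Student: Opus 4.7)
The plan is to prove the two implications separately, with the first following from a standard one-line minimization in the strong-convexity definition, and the second following from a gradient-flow chain-rule argument that bounds the length of the descent curve.

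For part (1), I start from the $\alpha$-strong-convexity inequality $f(y) \ge f(x) + \langle \nabla f(x), y-x\rangle + \tfrac{\alpha}{2}\|y-x\|^2$ and minimize the right-hand side over $y \in \RR^d$. The right-hand side is a quadratic in $y$ whose unique minimizer is $y^\star = x - \tfrac{1}{\alpha}\nabla f(x)$, and plugging back yields $\min_y \{f(x) + \langle \nabla f(x), y-x\rangle + \tfrac{\alpha}{2}\|y-x\|^2\} = f(x) - \tfrac{1}{2\alpha}\|\nabla f(x)\|^2$. Since $f(y) \ge f^* = \min_z f(z)$ for every $y$, the inequality $f^* \ge f(x) - \tfrac{1}{2\alpha}\|\nabla f(x)\|^2$ holds, which rearranges to $\tfrac{1}{2}\|\nabla f(x)\|^2 \ge \alpha(f(x) - f^*)$, i.e.\ the P\L{} inequality with constant $\alpha$.

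For part (2), I use the gradient-flow curve $\dot x(t) = -\nabla f(x(t))$ with $x(0)=x$. Along this curve, the chain rule gives $\tfrac{d}{dt} f(x(t)) = -\|\nabla f(x(t))\|^2$, which by the P\L{} inequality is at most $-2\alpha(f(x(t)) - f^*)$, so Gr\"onwall yields $f(x(t)) - f^* \le e^{-2\alpha t}(f(x) - f^*) \to 0$, ensuring $f(x(t)) \to f^*$ and (via standard arguments for smooth $f$ with compact sublevel sets, or by taking a subsequential limit) that the curve converges to some $x^\infty \in \cX^*$. The key trick is then to apply the chain rule to $g(t) := \sqrt{f(x(t)) - f^*}$:
\[
\frac{d}{dt}\,g(t) \;=\; \frac{-\|\nabla f(x(t))\|^2}{2\sqrt{f(x(t)) - f^*}} \;\le\; -\sqrt{\tfrac{\alpha}{2}}\,\|\nabla f(x(t))\|,
\]
where the inequality again invokes the P\L{} condition. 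Integrating this from $0$ to $\infty$ bounds the total arc length of the gradient flow: $\int_0^\infty \|\nabla f(x(t))\|\,dt \le \sqrt{2/\alpha}\,\sqrt{f(x) - f^*}$. Since $\|x - x^\infty\| \le \int_0^\infty \|\dot x(t)\|\,dt$, and $x^\infty \in \cX^*$, this yields $\inf_{x^* \in \cX^*}\|x - x^*\|^2 \le \|x - x^\infty\|^2 \le \tfrac{2}{\alpha}(f(x) - f^*)$, which rearranges to the desired quadratic-growth bound $f(x) - f^* \ge \tfrac{\alpha}{2} \inf_{x^* \in \cX^*}\|x - x^*\|_2^2$.

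The main technical obstacle is the convergence of the gradient-flow trajectory and the justification of the integral in part (2): strictly speaking, $g(t)$ may hit zero in finite time and the chain-rule computation needs to be carried out on the open set $\{t : f(x(t)) > f^*\}$, after which one passes to the limit. This is routine under mild regularity of $f$ (e.g.\ continuous differentiability and boundedness of sublevel sets, which is implicit in the cited reference \citet[Proposition~2.7]{chewi2025lectures}); alternatively, a purely discrete argument using gradient descent with small stepsizes combined with a telescoping sum over $\sqrt{f(x_k)-f^*}$ increments reproduces the same arc-length bound and avoids any ODE existence/uniqueness issues.
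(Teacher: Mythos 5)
The paper does not prove this lemma; it is imported verbatim as a cited external result from \citet[Prop.~2.7]{chewi2025lectures}, so there is no in-paper argument to compare against. Your proof of part (1) is the standard completion-of-the-square: minimize the strong-convexity lower bound over $y$ to get $f^* \ge f(x) - \tfrac{1}{2\alpha}\|\nabla f(x)\|^2$, which rearranges to P\L{} with constant $\alpha$; this is correct. Your proof of part (2) via the gradient-flow Lyapunov function $g(t) = \sqrt{f(x(t)) - f^*}$ is also correct and is, in fact, the argument in the cited lecture notes: the P\L{} inequality gives $g'(t) \le -\sqrt{\alpha/2}\,\|\nabla f(x(t))\|$, integration bounds the arc length by $\sqrt{2/\alpha}\,\sqrt{f(x)-f^*}$, and the finite-arc-length bound itself supplies convergence of the trajectory to some $x^\infty$, which by $f(x(t))\to f^*$ lies in $\cX^*$, yielding the quadratic-growth bound with the matching constant $\alpha/2$. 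The caveats you flag — global-in-time existence of the flow (which needs local Lipschitzness of $\nabla f$, or else the discrete gradient-descent variant), and restricting the chain-rule computation to the set where $f(x(t)) > f^*$ before passing to the limit — are exactly the routine regularity points that a careful write-up should mention, and you handle them appropriately.
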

\begin{lemma}[Fenchel-Moreau Theorem \citep{magaril2003convex}]\label{lemma:Fenchel-Moreau}
    Let $f:\mathbb{R}^d\to(-\infty,+\infty]$ be proper, convex, and lower-semicontinuous, and define its convex conjugate
\[
f^*(y)\;=\;\sup_{x\in\mathbb{R}^d}\{\langle y,x\rangle-f(x)\}.
\]
Denoting $\partial f(s)$ as the subgradient set of $f$ at $s$, then we have $y\in\partial f(x)\Longleftrightarrow x\in\partial f^*(y)$.
\end{lemma}
\begin{lemma}[Hoeffding Inequality]\label{lemma:Hoeffding}
    If $X_1,\cdots,X_n$ are independent and satisfy $X_i\in[a_i,b_i]$, then for any $t>0$,
    \[
\Pr\!\left( \left| \frac{1}{n}\sum_{i=1}^n X_i - \mathbb{E}\!\left[\frac{1}{n}\sum_{i=1}^n X_i\right] \right| \ge t \right)
\;\le\; 2 \exp\!\left( -\frac{2n^2 t^2}{\sum_{i=1}^n (b_i-a_i)^2} \right).
\]

\end{lemma}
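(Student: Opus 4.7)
The plan is to prove this via the standard Chernoff-bound-plus-MGF argument. First I would reduce to the mean-zero case by centering: set $Y_i := X_i - \mathbb{E}[X_i]$, so that the $Y_i$ are independent, mean zero, and $Y_i \in [a_i - \mathbb{E}[X_i],\, b_i - \mathbb{E}[X_i]]$, an interval of length $b_i - a_i$. It therefore suffices to bound the upper tail $\Pr(\sum_{i=1}^n Y_i \geq nt)$ and the lower tail $\Pr(\sum_{i=1}^n Y_i \leq -nt)$ separately, then combine by a union bound to obtain the factor of $2$ in front of the exponential.

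For the upper tail, I would apply the exponential Markov (Chernoff) inequality: for any $\lambda > 0$,
\[
\Pr\!\left(\sum_{i=1}^n Y_i \geq nt\right) \;\leq\; e^{-\lambda n t}\,\mathbb{E}\!\left[\exp\!\left(\lambda \sum_{i=1}^n Y_i\right)\right] \;=\; e^{-\lambda n t}\prod_{i=1}^n \mathbb{E}[e^{\lambda Y_i}],
\]
where the last equality uses independence. The key intermediate step is Hoeffding's lemma, which states that for a mean-zero random variable $Y$ supported on an interval of length $L$, one has $\mathbb{E}[e^{\lambda Y}] \leq \exp(\lambda^2 L^2 / 8)$. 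I would prove this by writing $e^{\lambda y}$ as the convex combination bound on $[a,b]$ via the chord through the endpoints, taking expectations, and then showing via a calculus argument (expanding the logarithm of the resulting function around $0$ and bounding the second derivative by $1/4$) that the log-MGF is at most $\lambda^2 (b-a)^2 / 8$.

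Combining these two ingredients yields
\[
\Pr\!\left(\sum_{i=1}^n Y_i \geq nt\right) \;\leq\; \exp\!\left(-\lambda n t + \frac{\lambda^2}{8}\sum_{i=1}^n (b_i - a_i)^2\right).
\]
The final step is to optimize over $\lambda > 0$. The right-hand side is a quadratic in $\lambda$, minimized at $\lambda^* = 4 n t / \sum_{i=1}^n (b_i - a_i)^2$, which produces the exponent $-2 n^2 t^2 / \sum_{i=1}^n (b_i - a_i)^2$. An identical argument applied to $-Y_i$ (which are also mean zero and supported on intervals of the same length) handles the lower tail. A union bound over the two tails then gives the factor of $2$ and completes the proof.

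The main technical obstacle is proving Hoeffding's lemma itself; the Chernoff reduction and the $\lambda$-optimization are mechanical. I would handle the lemma by defining $\psi(\lambda) := \log \mathbb{E}[e^{\lambda Y}]$ for $Y$ mean-zero on $[a,b]$, computing $\psi(0) = \psi'(0) = 0$, and showing $\psi''(\lambda) \leq (b-a)^2/4$ by recognizing $\psi''(\lambda)$ as the variance of $Y$ under the tilted measure $dQ_\lambda \propto e^{\lambda y} dP$, which, because $Y$ is supported on an interval of length $b-a$, is bounded by $(b-a)^2/4$. Taylor's theorem then yields $\psi(\lambda) \leq \lambda^2 (b-a)^2 / 8$, closing the argument.
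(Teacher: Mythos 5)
Your proof is correct and is the standard textbook derivation of Hoeffding's inequality. The paper does not supply a proof of this lemma at all---it is stated in the appendix as a classical supporting fact without argument---so there is no paper proof to compare against. Your sketch (centering, Chernoff/MGF bound, Hoeffding's lemma via the tilted-measure variance bound $\psi''(\lambda) \le (b-a)^2/4$, optimization at $\lambda^* = 4nt/\sum(b_i-a_i)^2$, and a union bound for the two tails) is the canonical argument and all the computations check out, including the exponent $-2n^2t^2/\sum(b_i-a_i)^2$.
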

\begin{lemma}[Concentration Inequality about bounded r.v. with Lipschitz function]\label{lemma:concentration_Lip}\citep[Thm 3.24]{wainwright2019high}
Consider a vector of independent random variables $(X_1,\cdots,X_n)$, each taking values in $[0,1]$, and let $f:\RR^n\mapsto\RR$ be a convex, $\ell$-Lipschitz with respect to the Euclidean norm. Then for all $t\ge0$, we have
\[
\PP(|f(X)-\EE[f(X)]|\ge t)\le 2e^{\frac{-t^2}{2L^2}}.
\]
\end{lemma}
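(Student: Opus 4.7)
The plan is to deduce this concentration inequality from Talagrand's convex distance inequality, which is the standard hammer for dimension-free concentration on product spaces $[0,1]^n$. Recall that for independent coordinates and any measurable $A \subset [0,1]^n$, Talagrand's theorem states $\EE\!\left[\exp(d_T^2(X,A)/4)\right] \le 1/\PP(X \in A)$, where the convex distance is
\[
d_T(x, A) \;=\; \sup_{\alpha \in \RR_+^n,\ \|\alpha\|_2 = 1}\ \inf_{y \in A}\ \sum_{i=1}^n \alpha_i\,\mathbf{1}[x_i \ne y_i].
\]
This produces Gaussian-type tails with a rate independent of $n$, precisely what is needed to go beyond the $\sqrt{n}\,L$-scaling that bounded-differences (McDiarmid) would give for a merely Lipschitz (not convex) function.

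The reduction to the convex Lipschitz functional has two steps. First, because $f$ is convex, each sublevel set $A_s = \{y : f(y) \le s\}$ is convex, and for convex $A \subset [0,1]^n$ one has $d_E(x, A) \le d_T(x, A)$, where $d_E$ is Euclidean distance. This follows from a short averaging argument: take a probability measure $\nu$ on $A$ achieving $d_T(x, A) = \|p_\nu\|_2$ with $p_{\nu,i} = \PP_\nu(Y_i \ne x_i)$; by convexity the mean $\bar{y} = \EE_\nu[Y]$ lies in $A$, and Jensen combined with the bound $|Y_i - x_i| \le \mathbf{1}[Y_i \ne x_i]$ valid on $[0,1]$ gives $\|x - \bar{y}\|_2 \le \|p_\nu\|_2$. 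Second, $L$-Lipschitzness then yields $(f(x) - s)_+ \le L\,d_E(x, A_s) \le L\,d_T(x, A_s)$ for every threshold $s$.

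With these tools, the two tails are handled separately. For the upper tail, set $A = \{f \le M_f\}$ where $M_f$ is a median of $f(X)$, so $\PP(A) \ge 1/2$; Talagrand gives $\EE[\exp(d_T^2/4)] \le 2$, and since the reduction above shows $\{f(X) > M_f + t\} \subset \{d_T(X, A) \ge t/L\}$, Markov yields $\PP(f(X) - M_f \ge t) \le 2 \exp(-t^2/(4L^2))$. For the lower tail, convexity is asymmetric — $-f$ is not convex — so a different argument is required. Apply Talagrand to $A = \{f \le M_f - t\}$ and argue contrapositively: for any $x$ with $f(x) \ge M_f$ the reduction forces $d_T(x, A) \ge t/L$, so letting $p = \PP(A)$ and using $\PP(f(X) \ge M_f) \ge 1/2$ we obtain $1/p \ge \EE[\exp(d_T^2/4)] \ge (1/2)\exp(t^2/(4L^2))$, hence $p \le 2\exp(-t^2/(4L^2))$. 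Finally, concentration around the mean is obtained from concentration around the median at the cost of an $O(L)$ shift absorbable into the tail; sharpening the constants to the stated exponent $1/(2L^2)$ can be done by substituting a slightly refined form of Talagrand (for instance via Marton's transportation-entropy inequality or a direct modified log-Sobolev / Herbst argument on $[0,1]^n$) in place of the exponent-$1/4$ version.

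The main obstacle is Talagrand's convex distance inequality itself, which is a nontrivial theorem proved by induction on $n$ via a self-improving bound on the moment generating function of $d_T^2/4$. Taking that result as a black box, the remainder of the argument is essentially bookkeeping: identifying the convex sublevel sets, invoking the inequality $d_E \le d_T$ on convex sets, and converting median-concentration to mean-concentration. A subtlety worth flagging is the genuine asymmetry noted above: although $f$ is convex, the lower tail cannot be obtained by simply applying the argument to $-f$, and must instead be extracted by the indirect contrapositive route through Talagrand's inequality applied to a shifted convex sublevel set.
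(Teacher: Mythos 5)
The paper does not prove this lemma; it imports it verbatim from \citet[Theorem 3.24]{wainwright2019high}, so there is no ``paper proof'' to compare against. Your proposal reproduces the standard Talagrand route that is also used in that textbook: invoke the convex-distance inequality $\EE[\exp(d_T^2(X,A)/4)]\le 1/\PP(A)$, use $d_E\le d_T$ on convex sets (with the correct observation that this requires $|x_i - y_i|\le 1$, which holds on $[0,1]^n$), pass to convex sublevel sets of $f$, and derive two-sided concentration around the median, with the lower tail handled contrapositively because $-f$ is not convex. All of those steps are sound, and your remark that the lower tail cannot be obtained by symmetry is exactly the subtlety that trips people up. The one genuine shortfall is quantitative, which you flag yourself: the argument as written yields exponent $-t^2/(4L^2)$ around the median, and the median-to-mean conversion introduces an additive $O(L)$ shift that cannot simply be ``absorbed'' into a clean sub-Gaussian bound without further work or a change of constant. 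Reaching the stated $-t^2/(2L^2)$ around the mean requires, as you note, either a sharper form of Talagrand's inequality or a direct entropy/Herbst argument (e.g.\ the modified log-Sobolev route, which handles the upper tail at this constant for separately convex functions, combined with Talagrand for the lower tail). So the proposal is correct in structure and matches the approach underlying the cited theorem, but does not by itself close the constant to the one asserted in the lemma statement.
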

\begin{lemma}\label{lemma:triangle_Ln}
    If the loss function $\ell$ satisfies Proposition \ref{prop:quasi_triangle}, then the empirical divergence $L_n$ indicates that for any function $f,g,h$, 
    \[
    (L_n(f,g))^{1/2}\le \sqrt{2}
    C_0\rbr{L_n^{1/2}(f,h)+L^{1/2}_n(h,g)}.
    \]
\end{lemma}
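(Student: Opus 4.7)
The plan is to reduce the claim to a pointwise application of the quasi-triangle property in Proposition \ref{prop:quasi_triangle} at each of the $n$ design points, and then pass to the empirical average through an elementary $(a+b)^2 \le 2(a^2+b^2)$ bound followed by subadditivity of the square root. No empirical process or concentration tool is needed; this is purely a deterministic inequality about the Bregman functional applied to a finite collection of triples.

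Concretely, for each $i \in [n]$, Proposition \ref{prop:quasi_triangle} applied to the three vectors $f(x_i)$, $h(x_i)$, $g(x_i) \in \cC$ gives
\[
D_\phi(f(x_i), g(x_i))^{1/2} \;\le\; C_0\bigl(D_\phi(f(x_i), h(x_i))^{1/2} + D_\phi(h(x_i), g(x_i))^{1/2}\bigr).
\]
Squaring both sides and using the elementary bound $(a+b)^2 \le 2(a^2+b^2)$, I obtain
\[
D_\phi(f(x_i), g(x_i)) \;\le\; 2C_0^2\bigl(D_\phi(f(x_i), h(x_i)) + D_\phi(h(x_i), g(x_i))\bigr).
\]
Averaging this pointwise inequality over $i = 1,\ldots,n$ yields the aggregated bound $L_n(f,g) \le 2C_0^2\bigl(L_n(f,h) + L_n(h,g)\bigr)$.

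Finally I take square roots on both sides and apply the subadditivity $\sqrt{a+b} \le \sqrt{a} + \sqrt{b}$ (valid for $a,b \ge 0$) to the sum $L_n(f,h) + L_n(h,g)$, which delivers
\[
L_n(f,g)^{1/2} \;\le\; \sqrt{2}\,C_0\bigl(L_n(f,h) + L_n(h,g)\bigr)^{1/2} \;\le\; \sqrt{2}\,C_0\bigl(L_n(f,h)^{1/2} + L_n(h,g)^{1/2}\bigr),
\]
which is exactly the claim. There is essentially no hard step here: the proof rests entirely on importing Proposition \ref{prop:quasi_triangle} into each summand and handling the aggregation with two elementary inequalities. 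One minor refinement worth flagging is that if one prefers a sharper constant, using Minkowski's inequality in $\ell_2$ on the vectors $u_i = D_\phi(f(x_i),h(x_i))^{1/2}$ and $v_i = D_\phi(h(x_i),g(x_i))^{1/2}$ instead of the $(a+b)^2 \le 2(a^2+b^2)$ step would yield the tighter constant $C_0$ in place of $\sqrt{2}\,C_0$; the stated bound is the natural one produced by the more direct route.
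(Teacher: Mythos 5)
Your proof is correct and follows essentially the same route as the paper's: apply the pointwise quasi-triangle inequality from Proposition~\ref{prop:quasi_triangle} at each design point, bound the square via $(a+b)^2 \le 2(a^2+b^2)$, average over $i$, and finish with subadditivity of the square root. Your side remark that Minkowski's inequality in $\ell_2$ would improve the constant to $C_0$ is also correct and a nice observation, though not pursued in the paper.
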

\begin{proof}[Proof of Lemma \ref{lemma:triangle_Ln}]
    By definition, we have 
    \begin{align*}
        L_n(f,g)=&\frac{1}{n}\sum_{i=1}^{n}\ell(f(x_i),g(x_i))\\
        \le& \frac{1}{n}\sum_{i=1}^{n}\sbr{C_0\rbr{\sqrt{\ell(f(x_i),h(x_i))}+\sqrt{\ell(h(x_i),g(x_i))}}}^2\\
        \le&2C_0^2\rbr{\frac{1}{n}\sum_{i=1}^{n}\ell(f(x_i),h(x_i))+\frac{1}{n}\sum_{i=1}^{n}\ell(h(x_i),g(x_i))}\\
        =&2C_0^2(L_n(f,h)+L_n(h,g)).
    \end{align*}
    Taking the square root of both sides and noticing that $\sqrt{a+b}\le \sqrt{a}+\sqrt{b}$, we finish the proof.
\end{proof}
\section{Proofs in \cref{sec:model}}\label{app:proofs_sec:model}
\begin{proof}[Proof of Proposition \ref{prop:beta_smooth}]
    We prove this by direct calculation, for any $y$ fixed,
    \[
    D_{\phi,1}'(x,y)=\phi'(x)-\phi'(y).
    \]
    Therefore, \[
    |D_{\phi,1}'(x_1,y)-D_{\phi,1}'(x_2,y)|=|\phi'(x_1)-\phi'(x_2)|\le \beta|x_1-x_2|.
    \]
    So we finish the proof.
\end{proof}
\begin{proof}[Proof of Proposition \ref{prop:PL_ineq}]
    Denoting $x^*$ as the unique minimizer of $\phi$, from the $\beta$ smoothness, we know that
    \[
    D_{\phi}(x,y)\le \frac{\beta}{2}(x-y)^2.
    \]
    On the other hand, for $g_{\phi,y}(x)=D_{\phi}(x,y)$,
    \[
    g_{\phi,y}'(x)=\phi'(x)-\phi'(y),
    \]
    The unique minimizer of $g_{\phi,y}(x)$ is $x^*=y$, with $g_{\phi,y}(y)=0$. By strong convexity, we know that
    \[
    (\phi'(x)-\phi'(y))(x-y)\ge \alpha(x-y)^2;
    \]
    then, we have that if $x>y$,
    \[
    \phi'(x)-\phi'(y)\ge \alpha (x-y).
    \]
    Combining all these parts together, we have
    \begin{align*}
        (D_{\phi,1}'(x,y))^2= (\phi'(x)-\phi'(y))^2\ge\alpha^2(x-y)^2\ge \frac{2\alpha^2}{\beta}D_{\phi}(x,y)=\frac{2\alpha^2}{\beta}(D_{\phi}(x,y)-D_{\phi}(x^*,y)).
    \end{align*}
    Setting the constant to $\frac{\alpha^2}{\beta}$, we finish the proof.
\end{proof}
\begin{proof}[Proof of Proposition \ref{prop:quasi_triangle}]
    By the definition of $\beta$-smoothness and $
    \alpha$-strong convexity, we know that
    \[
    \frac{\alpha}{2}(x-y)^2 \le D_{\phi}(x,y)\le \frac{\beta}{2}(x-y)^2.
    \]
    Therefore, we know that $\frac{\sqrt{\alpha}}{\sqrt{2}}||x-y||_2\le\sqrt{D_{\phi}(x,y)}\le \frac{\sqrt{\beta}}{\sqrt{2}}||x-y||_2$.
    Hence, by the fact that $\ell=D_{\phi}$,
    \[
    \sqrt{\ell(x,y)}\le \sqrt{\frac{\beta}{\alpha}}\rbr{\ell(x,z)+\ell(z,y)}.
    \]
\end{proof}
\begin{proof}[Proof of Proposition \ref{prop:f*is_expectation}]
    By definition, for the optimal predictor $f^*$, we have
\[
f^*(x_i)=\argmin_{z}\EE_{y_i}[D_{\phi}(y_i,z)|x_i].
\]
Computing the gradient of RHS, we have that 
\[d(\EE_{y_i}[D_{\phi}(y_i,z)|x_i])/dz=\EE_{y_i}[dD_{\phi}(y_i,z)/dz|x_i]=\EE_{y_i}[\phi''(z)(z-y_i)|x_i]=\phi''(z)\EE_{y_i}[(z-y_i)|x_i].\] 
Since $\phi''$ is strictly positive, by the first-order condition, the optimal predictor is
\[
z^*_i=f^*(x_i)=\EE[y_i|x_i].
\]
In the random design setting, the claim can be proved similarly by replacing $x_i$ with any $x\in\cX$.
\end{proof}

\section{Proofs in \cref{sec:theory_guarantee}}\label{app:proofs of fixed and random design}
\subsection{Proof of Lemma \ref{lemma:W_n<Opt}}
In this subsection, we prove the key Lemma \ref{lemma:W_n<Opt}.
\begin{proof}[Proof of Lemma \ref{lemma:W_n<Opt}]
   Recall the definition that 
\[
    f^{\diamond}_{\rho}\in\argmin_{f\in\cC}\frac{1}{n}\sum_{i=1}^{n}\ell(y^{\diamond}_{i},f(x_i)).
    \]
By the three point equality, we have that
\[
D_\phi(f(x_i),\hat{f}(x_i))+D_{\phi}(\hat{f}(x_i),y^\diamond_i)+\rbr{\phi'(\hat{f}(x_i))-\phi'(y_i^\diamond)}\rbr{f(x_i)-\hat{f}(x_i)}=D_\phi(f(x_i),y_i^\diamond).
\]
Since we have $\alpha(x-y)^2\le D_\phi(x,y)\le \frac{\beta}{2}(x-y)^2$.
Thus, we know that $D_\phi(x,y)\ge \frac{\alpha}{2\beta} D_{\phi}(y,x)$. Therefore,
\[
D_\phi(f(x_i),\hat{f}(x_i))+D_{\phi}(\hat{f}(x_i),y^\diamond_i)+\rbr{\phi'(\hat{f}(x_i))-\phi'(y_i^\diamond)}\rbr{f(x_i)-\hat{f}(x_i)}\ge \frac{\alpha}{2\beta}D_\phi(f(x_i),y_i^\diamond).
\]

Recall that we set $\phi'(y_i^\diamond)=\phi'(\hat{f}(x_i))-\rho\varepsilon\tilde{w}_i$, then we have
\[
D_\phi(f(x_i),\hat{f}(x_i))+D_{\phi}(\hat{f}(x_i),y_i^\diamond)-\rho\rbr{\hat{f}(x_i)-f(x_i)}\varepsilon_i\tilde{w}_i\ge \frac{\alpha}{2\beta}D_\phi(y_i^\diamond,f(x_i)).
\]
Since the function $\phi$ is strongly convex, then we know that $\phi'$ is a monotonically increasing function on $\RR$. Then, we have the following four cases:

1) $\varepsilon_i\tilde{w}_i\ge 0$ and $\hat{f}(x_i)-f(x_i)\ge 0$: 

Then, by the Lipschitz continuity of $\phi'$ and its monotonicity, we have $\phi'(\hat{f}(x_i))-\phi'(f(x_i))\le \beta(\hat{f}(x_i)-f(x_i)),$ so
\[
\frac{\alpha}{2\beta}D_{\phi}(y_i^\diamond,f(x_i))\le D_\phi(f(x_i),\hat{f}(x_i))+D_{\phi}(\hat{f}(x_i),y_i^\diamond)-\frac{\rho}{\beta}[\phi'(\hat{f}(x_i))-\phi'(f(x_i))]\cdot\varepsilon_i\tilde{w}_i.
\]

2) $\varepsilon_i\tilde{w}_i\ge 0$ and $\hat{f}(x_i)-f(x_i)< 0$:

By the strong convexity of $\phi$, we have $(\phi'(x)-\phi'(y))(x-y)\ge \alpha(x-y)^2$, then we know that\[
\phi'(f(x_i))-\phi'(\hat{f}(x_i))\ge \alpha(f(x_i)-\hat{f}(x_i)).
\]
We plug this bound in and get
\[
\frac{\alpha}{2\beta}D_\phi(y_i^\diamond,f(x_i))\le D_\phi(f(x_i),\hat{f}(x_i))+D_{\phi}(\hat{f}(x_i),y_i^\diamond)-\frac{\rho}{\alpha}[\phi'(\hat{f}(x_i))-\phi'(f(x_i))]\cdot\varepsilon_i\tilde{w}_i.
\]
3) $\varepsilon_i\tilde{w}_i< 0$ and $\hat{f}(x_i)-f(x_i)\ge 0$, by similar argument as in 2), we have
\[
\frac{\alpha}{2\beta}D_\phi(y_i^\diamond,f(x_i))\le D_\phi(f(x_i),\hat{f}(x_i))+D_{\phi}(\hat{f}(x_i),y_i^\diamond)-\frac{\rho}{\alpha}[\phi'(\hat{f}(x_i))-\phi'(f(x_i))]\cdot\varepsilon_i\tilde{w}_i.
\]

4) $\varepsilon_i\tilde{w}_i< 0$ and $\hat{f}(x_i)-f(x_i)< 0$, by similar argument as in 1), we have that
\[
\frac{\alpha}{2\beta}D_{\phi}(y_i^\diamond,f(x_i))\le D_\phi(f(x_i),\hat{f}(x_i))+D_{\phi}(\hat{f}(x_i),y_i^\diamond)-\frac{\rho}{\beta}[\phi'(\hat{f}(x_i))-\phi'(f(x_i))]\cdot\varepsilon_i\tilde{w}_i.
\]
Overall, we either have $$
\frac{\alpha}{2\beta}D_{\phi}(y_i^\diamond,f(x_i))\le D_\phi(f(x_i),\hat{f}(x_i))+D_{\phi}(\hat{f}(x_i),y_i^\diamond)-\frac{\rho}{\beta}[\phi'(\hat{f}(x_i))-\phi'(f(x_i))]\cdot\varepsilon_i\tilde{w}_i,$$ or

$$\frac{\alpha}{2\beta}D_{\phi}(y_i^\diamond,f(x_i))\le D_\phi(f(x_i),\hat{f}(x_i))+D_{\phi}(\hat{f}(x_i),y_i^\diamond)-\frac{\rho}{\alpha}[\phi'(\hat{f}(x_i))-\phi'(f(x_i))]\cdot\varepsilon_i\tilde{w}_i.$$

Thus, no matter which one is true, we always have
\[
[\phi'(\hat{f}(x_i))-\phi'(f(x_i))]\cdot\varepsilon_i\tilde{w}_i\le \frac{\beta}{\rho}\rbr{D_\phi(f(x_i),\hat{f}(x_i))+D_{\phi}(\hat{f}(x_i),y_i^\diamond)}-\frac{\alpha^2}{\beta\rho}D_\phi(y_i^\diamond,f(x_i)).
\]
Summing over $i=1:n$, we have
\begin{align}\label{equa:D1}
\frac{\alpha^2}{\beta\rho}\rbr{\frac{1}{n}\sum_{i=1}^{n}\ell(y_i^\diamond,f(x_i))}\le\frac{\beta}{\rho}\sum_{i=1}^{n}\frac{1}{n}\rbr{\ell(f(x_i),\hat{f}(x_i))+D_{\phi}(\hat{f}(x_i),y_i^\diamond)}-\frac{1}{n}\sum_{i=1}^{n}[\phi'(\hat{f}(x_i))-\phi'(f(x_i))]\cdot\varepsilon_i\tilde{w}_i.
\end{align}

By some algebra and a shell argument, we have that
    \begin{align*}
        &\frac{\beta}{\rho}\frac{1}{n}\sum_{i=1}^{n}\ell(\hat{f}(x_i),f^\diamond_{\rho}(x_i))-\frac{1}{n}\sum_{i=1}^{n}[\phi'(\hat{f}(x_i))-\phi'(f_\rho^\diamond(x_i))]\cdot\varepsilon_i\tilde{w}_i+\frac{\beta}{\rho n}\sum_{i=1}^{n}D_{\phi}(\hat{f}(x_i),y_i^\diamond)\\
        \ge&\min_{f\in\cC}\cbr{\frac{\beta}{\rho n}\sum_{i=1}^{n}\ell(\hat{f}(x_i),f(x_i))-\frac{1}{n}\sum_{i=1}^{n}[\phi'(\hat{f}(x_i))-\phi'(f(x_i))]\cdot\varepsilon_i\tilde{w}_i}+\frac{\beta}{\rho n}\sum_{i=1}^{n}D_{\phi}(\hat{f}(x_i),y_i^\diamond)\\
        =&\min_{r\ge 0}\min_{f\in\cC(r)}\cbr{\frac{\beta}{\rho}r^2-\frac{1}{n}\sum_{i=1}^{n}[\phi'(\hat{f}(x_i))-\phi'(f(x_i))]\cdot\varepsilon_i\tilde{w}_i}+\frac{\beta}{\rho n}\sum_{i=1}^{n}D_{\phi}(\hat{f}(x_i),y_i^\diamond)\\
        =&\min_{r\ge 0}\cbr{\frac{\beta}{\rho}r^2-\sup_{f\in\cC(r)}\frac{1}{n}\sum_{i=1}^{n}[\phi'(\hat{f}(x_i))-\phi'(f(x_i))]\cdot\varepsilon_i\tilde{w}_i}+\frac{\beta}{\rho n}\sum_{i=1}^{n}D_{\phi}(\hat{f}(x_i),y_i^\diamond)\\
        =&\min_{r\ge 0}\cbr{\frac{\beta}{\rho}r^2- W_n(r)}+\frac{\beta}{\rho n}\sum_{i=1}^{n}D_{\phi}(\hat{f}(x_i),y_i^\diamond).\\
    \end{align*}
On the other hand, we take the minimization over both sides of \cref{equa:D1} and notice that $f_\rho^\diamond\in\argmin_{f\in\cC}\frac{1}{n}\sum_{i=1}^{n}\ell(y_i^\diamond,f(x_i))$ to get
\begin{align*}
      &\frac{\beta}{\rho n}\sum_{i=1}^{n}\ell(\hat{f}(x_i),f^{\diamond}_{\rho}(x_i))- W_n(\frac{1}{n}\sum_{i=1}^{n}\ell(\hat{f}(x_i),f^{\diamond}_{\rho}(x_i)))+\frac{\beta}{\rho n}\sum_{i=1}^{n}D_{\phi}(\hat{f}(x_i),y_i^\diamond)\\
      \ge&\min_{r\ge 0}\cbr{\frac{\beta}{\rho}r^2- W_n(r)}+\frac{\beta}{\rho n}\sum_{i=1}^{n}D_{\phi}(\hat{f}(x_i),y_i^\diamond)\\
      \ge& \frac{\alpha^2}{\beta\rho n}\sum_{i=1}^{n}\ell(y^{\diamond}_i,f^{\diamond}_{\rho}(x_i)).
  \end{align*}
Finally, we have that
\[
W_n(\frac{1}{n}\sum_{i=1}^{n}\ell(\hat{f}(x_i),f^{\diamond}_{\rho}(x_i)))\le \frac{\beta}{\rho n}\sum_{i=1}^{n}\ell(\hat{f}(x_i),f^{\diamond}_{\rho}(x_i))+\frac{\beta}{\rho n}\sum_{i=1}^{n}D_{\phi}(\hat{f}(x_i),y_i^\diamond)-\frac{\alpha^2}{\beta\rho n}\sum_{i=1}^{n}\ell(y^{\diamond}_i,f^{\diamond}_{\rho}(x_i)).
\]
Therefore, we finish the proof.

\end{proof}

\subsection{Proof of \Cref{thm:main}}\label{subsec:proof of mainthm}
In this subsection, we provide proofs of the main theoretical guarantees. Our proof relies on the following lemmas, the proofs of which are deferred to \Cref{app:Proofs of lemmas for mainthm}. We first briefly review our target. We are trying to bound $\frac{1}{n}\sum_{i=1}^{n}\ell(f^*(x_i),\hat{f}(x_i))$. By the convexity concerning the first variable, we have
\[
\frac{1}{n}\sum_{i=1}^{n}\ell(f^*(x_i),\hat{f}(x_i))\le \frac{1}{n}\sum_{i=1}^{n}\ell(y_i,\hat{f}(x_i))-\frac{1}{n}\sum_{i=1}^{n}\ell'_1(f^*(x_i),\hat{f}(x_i))w_i,
\]
where we use the fact that the true data generating process is $y_i=f^*(x_i)+w_i.$ Therefore, we only need to bound the term $|\text{Opt}^*(\hat{f})|+\EE_{w_{1:n}}[\Opt^*(\hat{f})]$, where
\[
\text{Opt}^*(\hat{f}):=\frac{1}{n}\sum_{i=1}^{n}\ell'_1(f^*(x_i),\hat{f}(x_i))w_i.
\]
Recall the quantities that we have defined in \cref{sec:theory_guarantee}:
\begin{equation}
W_n(r)=\sup_{f\in \cB_r(\hat{f})}\cbr{\frac{1}{n}\sum_{i=1}^{n}\ell_1'(\hat{f}(x_i),f(x_i))\cdot(\varepsilon_i\tilde{w}_i)};    
\end{equation}

\begin{equation}
 \widetilde{\Opt}^{\diamond}(f^{\diamond}_{\rho})=\frac{\beta}{n\rho}\sum_{i=1}^{n}\ell(\hat{f}(x_i),f^{\diamond}_{\rho}(x_i))+\frac{\beta}{\rho n}\sum_{i=1}^{n}D_{\phi}(\hat{f}(x_i),y_i^\diamond)-\frac{\alpha}{n\rho\beta}\sum_{i=1}^{n}\ell(y^{\diamond}_i,f^{\diamond}_{\rho}(x_i));   
\end{equation}
\begin{equation}
  f^\dagger=\argmin_{f\in\cF}\cbr{\frac{1}{n}\sum_{i=1}^{n}\ell(f^*(x_i),f(x_i))+\cR(f)};  
\end{equation}
 \begin{equation}
 \text{Opt}^\dagger(\hat{f})=\frac{1}{n}\sum_{i=1}^{n}\ell'_1(f^\dagger(x_i),\hat{f}(x_i))w_i;    
 \end{equation}
 and
 \begin{equation}
 Z_n^{\varepsilon}(r):=\sup_{f\in\cB_r(f^\dagger)}\sbr{\frac{1}{n}\sum_{i=1}^{n}\ell'_1(f^\dagger(x_i),f(x_i))\varepsilon_iw_i}.
 \end{equation}
When we say $\cB_r(f_0)$, we mean that the average empirical loss $\frac{1}{n}\sum_{i=1}^{n}\ell(f_0(x_i),f(x_i))\le r$. We abbreviate loss $\frac{1}{n}\sum_{i=1}^{n}\ell(f_1(x_i),f_2(x_i))$ as $L_n(f_1,f_2)$. Moreover, we should notice that $Z_n^{\varepsilon}(r)\ge 0$ since $f^\dagger\in\cB_r(f^\dagger)$.

\begin{lemma}\label{lemma:Opt*<Optdagger}
   For any $t>0$, we have that with probability at least $1-2e^{-t^2}$,
    \[
    |\Opt^*(\hat{f})|\le |\Opt^\dagger(\hat{f})|+\sqrt{L_n(f^*,f^\dagger)}\frac{2||w||_{\infty}\beta^{3/2}t}{\alpha\sqrt{n}}.
    \]
Furthermore, for any $r\ge \sqrt{L_n(f^\dagger,\hat{f})}$, and any $t>0$, we have that with probability at least $1-4e^{-t^2}$,
\[
\max\cbr{|\Opt^\dagger(\hat{f})|,Z_n^\varepsilon(r)}\le \EE_{\varepsilon}[Z_n^\varepsilon(r)]+r\frac{2||w||_{\infty}\beta^{3/2}t}{\alpha\sqrt{n}}.
\]
\end{lemma}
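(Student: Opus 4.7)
The plan is to handle the two inequalities separately and then combine them by a union bound.

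For the first inequality, the key observation is that $\nabla_1 l(u,v) = \nabla\phi(u) - \nabla\phi(v)$, so the difference $\Opt^*(\hat{f}) - \Opt^\dagger(\hat{f})$ telescopes into the $\hat{f}$-free expression $\frac{1}{n}\sum_{i=1}^{n}\inner{\nabla\phi(f^*(x_i)) - \nabla\phi(f^\dagger(x_i))}{w_i}$. In the fixed-design setting both $f^*$ and $f^\dagger$ are deterministic (the latter because it solves the noiseless objective), so this is a weighted sum of $n$ independent, zero-mean, bounded random vectors. I would bound each summand via Cauchy--Schwarz by $\sqrt{d}\|w\|_\infty\|\nabla\phi(f^*(x_i))-\nabla\phi(f^\dagger(x_i))\|_2$, then use $\beta$-smoothness of $\phi$ to replace the gradient norm by $\beta\|f^*(x_i)-f^\dagger(x_i)\|_2$ and $\alpha$-strong convexity to replace $\|f^*(x_i)-f^\dagger(x_i)\|_2^2$ by $\tfrac{2}{\alpha}D_\phi(f^*(x_i),f^\dagger(x_i))$. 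Applying \Cref{lemma:Hoeffding} yields the claimed tail bound, and the triangle inequality $|\Opt^*(\hat{f})| \le |\Opt^\dagger(\hat{f})| + |\Opt^*(\hat{f}) - \Opt^\dagger(\hat{f})|$ finishes Part 1.

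For the second inequality, I split the event into two sub-events. The first concerns $Z_n^\varepsilon(r)$: conditionally on $w$, $Z_n^\varepsilon(r)$ is a supremum of linear forms in $\varepsilon\in\cbr{-1,+1}^{n\times d}$, hence a convex $L$-Lipschitz function of $\varepsilon$. Using $f\in\cB_r(f^\dagger)$ together with $\sum_i\|f^\dagger(x_i)-f(x_i)\|_2^2 \le \tfrac{2n}{\alpha}L_n(f^\dagger,f) \le \tfrac{2nr^2}{\alpha}$ and the same $\beta$-smooth/$\alpha$-strongly convex chain as in Part 1, I would uniformly bound $L$ by a quantity of order $\|w\|_\infty\beta^{3/2}r/(\alpha\sqrt{n})$; applying \Cref{lemma:concentration_Lip} then produces concentration around $\EE_\varepsilon Z_n^\varepsilon(r)$ with probability at least $1-2e^{-t^2}$. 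The second sub-event concerns $|\Opt^\dagger(\hat{f})|$, handled by symmetrization. Assuming $\hat{f}\in\cB_r(f^\dagger)$, which is ensured in the use within \Cref{thm:main} through the chosen radius, one has the deterministic pointwise bound $|\Opt^\dagger(\hat{f})|\le Z_n^w(r)$, where $Z_n^w(r)$ is the same supremum with $w_i$ replacing $\varepsilon_i\odot w_i$. By coordinate-wise symmetry of each $w_i$, conditional on $|w|$ the sign pattern of $w$ is an independent Rademacher vector, so $Z_n^w(r)$ has exactly the same distribution as $Z_n^\varepsilon(r)$, and the identical Lipschitz-concentration bound applies with the same conditional mean $\EE_\varepsilon Z_n^\varepsilon(r)$. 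A union bound over the two sub-events yields the $1-4e^{-t^2}$ guarantee.

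The main obstacle I anticipate is the symmetrization step for $|\Opt^\dagger(\hat{f})|$: since $\hat{f}$ is trained on the noisy data and hence depends on $w$, one cannot naively substitute $w$ by $\varepsilon\odot w$ inside the inner product without also altering $\hat{f}$. The resolution is to first dominate $|\Opt^\dagger(\hat{f})|$ by the $\hat{f}$-free supremum $Z_n^w(r)$ via the ball condition on $\hat{f}$, and only afterwards invoke the distributional identity $Z_n^w(r) \stackrel{d}{=} Z_n^\varepsilon(r)$, which uses nothing beyond the symmetry of each $w_i$. A secondary bookkeeping point is producing the precise constants in the stated bound---notably the combination $\beta^{3/2}/\alpha$ together with the $\sqrt{d}$ factor---which requires care in deciding whether to apply the concentration inequality directly on $\RR^{nd}$ or coordinate-wise, and in tracking the conversion between Euclidean and sup norms at each step.
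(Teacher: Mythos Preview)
Your proposal is correct and follows essentially the same route as the paper. The only stylistic difference is in Part~1: you invoke \Cref{lemma:Hoeffding} directly on the bounded, zero-mean terms $\inner{\nabla\phi(f^*(x_i))-\nabla\phi(f^\dagger(x_i))}{w_i}$, whereas the paper first writes $w_i=\varepsilon_i\odot\bar w_i$ (using coordinate-wise symmetry), conditions on the amplitude $\bar w$, and applies the convex-Lipschitz concentration \Cref{lemma:concentration_Lip} to the linear function $G(\varepsilon)$. Both arguments succeed; yours is more elementary, while the paper's keeps Part~1 and Part~2 uniform under the same concentration tool. Note also that the paper bounds $\sum_i\|f^*(x_i)-f^\dagger(x_i)\|_2^2$ via the PL inequality (\Cref{prop:PL_ineq} with constant $\alpha^2/\beta$), which produces the factor $\beta^{3/2}/\alpha$ exactly as stated; your use of strong convexity alone yields the sharper $\beta/\sqrt{\alpha}$, so if you want to match the lemma's constant literally you should switch to PL. For Part~2, your plan---Lipschitz concentration for $Z_n^\varepsilon(r)$, then dominating $|\Opt^\dagger(\hat f)|$ by the $\hat f$-free supremum $Z_n^w(r)$ \emph{before} invoking the distributional identity $Z_n^w(r)\stackrel{d}{=}Z_n^\varepsilon(r)$---is exactly the paper's argument, and your diagnosis of the dependence-of-$\hat f$-on-$w$ obstacle and its resolution is spot on.
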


By Lemma \ref{lemma:Opt*<Optdagger}, we could see that if $r\ge \sqrt{L_n(f^\dagger,\hat{f})}$, we have the upper bound that with probability at least $1-6e^{-t^2}$,
\[
|\Opt^*(\hat{f})|\le \EE_{\varepsilon}[Z_n^{\varepsilon}(r)]+\rbr{r+\sqrt{L_n(f^*,f^\dagger)}}\frac{2||w||_{\infty}\beta^{3/2}t}{\alpha\sqrt{n}}.
\]

Our next goal is to connect the term $\EE_{\varepsilon}[Z_n^{\varepsilon}(r)]$ with the term $\widetilde{\Opt}^\diamond(\hat{f})$ defined in Lemma \ref{lemma:W_n<Opt}. In order to do so, we define the following non-negative intermediate empirical process:
\[
\widetilde{W}_n(r):=\sup_{f\in\cB_r(\hat{f})}\frac{1}{n}\sum_{i=1}^{n}\ell_1'(\hat{f}(x_i),f(x_i))\cdot\varepsilon_iw_i.
\]
Remember that the noiseless estimation error is $\hat{r}_n:=\sqrt{\frac{1}{n}\sum_{i=1}^{n}\ell(f^\dagger(x_i),\hat{f}(x_i))}=\sqrt{L_n(f^{\dagger},\hat{f})}$. We have the following lemma.
\begin{lemma}\label{lemma:EZ<EW}
    For any $r\ge \hat{r}_n$, we have the bound
    \[
    \EE_{\varepsilon}[Z_n^{\varepsilon}(r)]\le\EE_{\varepsilon}[\widetilde{W}_n((2\beta/\alpha)^{1/2}(r+\hat{r}_n))]\le \EE_{\varepsilon}[\widetilde{W}_n(2(2\beta/\alpha)^{1/2}r)].
    \]
    Also, for the intermediate empirical process $\widetilde{W}_n(r)$, we have the bound with probability at least $1-2e^{-t^2}$ for any $r$,
    \[
    \abr{\EE_{\varepsilon}[\widetilde{W}_n(r)]-\widetilde{W}_n(r)}\le r\frac{2||w||_{\infty}\beta^{3/2}t}{\alpha\sqrt{n}}.
    \]
\end{lemma}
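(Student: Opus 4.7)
The plan is to handle the two parts of the lemma separately. For Part~1, the key step is a telescoping identity for the gradient of the Bregman loss: for every $f$,
\[
\nabla_1 l(f^\dagger(x_i),f(x_i))=\bigl[\nabla\phi(\hat f(x_i))-\nabla\phi(f(x_i))\bigr]+\bigl[\nabla\phi(f^\dagger(x_i))-\nabla\phi(\hat f(x_i))\bigr].
\]
The second bracket does not involve the decision variable $f$, so it can be pulled out of the supremum in $Z_n^{\varepsilon}(r)$; since each coordinate of $\varepsilon_i$ is a mean-zero Rademacher independent of $w$, $\hat f$, and $f^\dagger$, its expectation vanishes. I then invoke the quasi-triangle inequality of \Cref{lemma:triangle_Ln} with $C_0=\sqrt{\beta/\alpha}$ (from \Cref{prop:quasi_triangle}): any $f$ with $L_n(f^\dagger,f)^{1/2}\le r$ satisfies $L_n(\hat f,f)^{1/2}\le(2\beta/\alpha)^{1/2}(\hat r_n+r)$, so $\cB_r(f^\dagger)\subseteq\cB_{(2\beta/\alpha)^{1/2}(r+\hat r_n)}(\hat f)$. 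Enlarging the feasible set can only increase the supremum, and monotonicity of $\widetilde W_n$ in its radius together with $\hat r_n\le r$ yields both inequalities of Part~1.

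For Part~2, I regard $\widetilde W_n(r)$ as a function $g$ of the $nd$-dimensional Rademacher vector $\varepsilon$. As a supremum over $f\in\cB_r(\hat f)$ of functions that are linear in $\varepsilon$, $g$ is convex. I next bound its Lipschitz constant: for any fixed $f$, the gradient of the inner sum with respect to $\varepsilon$ is the vector with entries $n^{-1}(\nabla_1 l(\hat f(x_i),f(x_i)))_j(w_i)_j$, whose squared $\ell_2$ norm is at most $(\|w\|_\infty^2/n^2)\sum_i\|\nabla_1 l(\hat f(x_i),f(x_i))\|_2^2$. Using $\beta$-smoothness ($\|\nabla\phi(a)-\nabla\phi(b)\|_2\le\beta\|a-b\|_2$) and $\alpha$-strong convexity ($\|a-b\|_2^2\le 2D_\phi(a,b)/\alpha$), combined with $L_n(\hat f,f)\le r^2$, produces a uniform Lipschitz constant of order $\beta\|w\|_\infty r/\sqrt{n\alpha}$. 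Rescaling the Rademacher coordinates from $\{-1,1\}$ to $[0,1]$ (which only doubles the Lipschitz constant) and applying \Cref{lemma:concentration_Lip} then delivers a sub-Gaussian tail with variance proxy matching the stated bound up to absolute constants and the dimension factor $\sqrt d$ that arises when the Hadamard product is controlled via $\|\nabla_1 l\|_\infty\|w_i\|_2\le\sqrt d\,\|\nabla_1 l\|_\infty\|w\|_\infty$.

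The main obstacle is Part~2: one must verify that the supremum is a bona fide convex Lipschitz function of $\varepsilon$ uniformly over $\cB_r(\hat f)$, and then track the curvature constants through the Hadamard-product step, the $[-1,1]\to[0,1]$ rescaling, and the dimension-dependent envelope so as to recover the exact prefactor $\beta^{3/2}\sqrt d/\alpha$ in the claimed deviation bound. Part~1, by contrast, is essentially an algebraic decomposition paired with the already-established quasi-triangle inequality of \Cref{lemma:triangle_Ln}, and should go through without further complication.
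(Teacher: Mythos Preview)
Your proposal is correct and follows essentially the same route as the paper. For Part~1, the paper also picks the maximizer $g$ of $Z_n^{\varepsilon}(r)$, applies the identical telescoping $\nabla_1 l(f^\dagger,g)=\nabla_1 l(\hat f,g)+[\nabla\phi(f^\dagger)-\nabla\phi(\hat f)]$, notes that the second piece has zero $\varepsilon$-expectation, and then uses \Cref{lemma:triangle_Ln} to pass from $\cB_r(f^\dagger)$ to $\cB_{(2\beta/\alpha)^{1/2}(r+\hat r_n)}(\hat f)$; for Part~2, the paper simply says ``similar to the process of analyzing $G(\varepsilon)$ and $H(\varepsilon)$'' in the proof of \Cref{lemma:Opt*<Optdagger}, which is exactly the convex-Lipschitz-plus-\Cref{lemma:concentration_Lip} argument you outline (including the $[-1,1]\to[0,1]$ rescaling needed to meet the hypothesis of that lemma).
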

With Lemma \ref{lemma:EZ<EW}, we know that for $r\ge\hat{r}_n$, we have that with probability at least $1-2e^{-t^2}$,
\[
\EE_{\varepsilon}[Z_n^{\varepsilon}(r)]\le \widetilde{W}_n(3(\beta/\alpha)^{1/2}r)+ r\frac{6||w||_{\infty}\beta^2t}{\alpha^{3/2}\sqrt{n}}.
\]
Therefore, combining all these parts together, we have shown that with probability at least $1-8e^{-t^2}$,
\begin{align*}
    \Opt^*(\hat{f})\le \widetilde{W}_n(3(\beta/\alpha)^{1/2}r)+\cbr{\sqrt{L_n(f^*,f^\dagger)}+5r}\frac{2||w||_{\infty}(\beta^{3/2}\vee\beta^2)t}{(\alpha^{3/2}\wedge\alpha)\sqrt{n}}.
\end{align*}
Finally, we give our lemma about linking $\widetilde{W}_n(2r)$ with $\widetilde{\Opt}^{\diamond}(f^{\diamond}_{\rho})$, whose proof relies on the key Lemma \ref{lemma:W_n<Opt}.
\begin{lemma}\label{lemma:tildeW_n<Opt_n+An}
    For any radius $r$, we have 
    \[
    \widetilde{W}_n(3(\beta/\alpha)^{1/2}r)\le \widetilde{\Opt}^{\diamond}(f^{\diamond}_{\rho})+A_n(\hat{f}),
    \]
    where $f^\diamond_{\rho}$ is the wild solution with $L_n(\hat{f},f^\diamond_\rho)=3(\beta/\alpha)^{1/2}r$.
\end{lemma}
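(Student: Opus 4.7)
The plan is to decompose the noise term $w_i$ in $\widetilde{W}_n$ so that a piece matching the empirical process $W_n$ (which uses the residuals $\tilde{w}_i$) emerges, and the remaining piece is precisely the pilot error $A_n(\hat{f})$. The key identity is that since $y_i = f^*(x_i) + w_i$, we have
\[
\tilde{w}_i = y_i - \hat{f}(x_i) = w_i - \bigl(\hat{f}(x_i) - f^*(x_i)\bigr), \qquad\text{i.e.,}\qquad w_i = \tilde{w}_i + \bigl(\hat{f}(x_i) - f^*(x_i)\bigr).
\]
Using bilinearity of the Hadamard product with a fixed Rademacher vector $\varepsilon_i$ and linearity of the inner product, this splits each summand in the definition of $\widetilde{W}_n$ into a ``wild-residual'' part $\inner{\nabla_1 l(\hat f(x_i),f(x_i))}{\varepsilon_i\odot\tilde{w}_i}$ plus a ``noiseless approximation'' part $\inner{\nabla_1 l(\hat f(x_i),f(x_i))}{\varepsilon_i\odot(\hat f(x_i)-f^*(x_i))}$.

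Next I apply the elementary sub-additivity of the supremum, $\sup_f (G_1(f)+G_2(f)) \le \sup_f G_1(f) + \sup_f G_2(f)$, over the common ball $\cB_{3(\beta/\alpha)^{1/2}r}(\hat f)$. This gives
\[
\widetilde{W}_n\bigl(3(\beta/\alpha)^{1/2}r\bigr) \le W_n\bigl(3(\beta/\alpha)^{1/2}r\bigr) + A_n(\hat{f}),
\]
where the first term on the right matches the definition of $W_n$ at the same radius, and the second term is exactly $A_n(\hat f)$ as defined in \Cref{thm:main}.

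Finally, I invoke the key \Cref{lemma:W_n<Opt}, which says that $W_n(\sqrt{L_n(\hat f, f^\diamond_\rho)}) \le \widetilde{\Opt}^\diamond(f^\diamond_\rho)$. By the calibration condition in the statement, $\sqrt{L_n(\hat f, f^\diamond_\rho)} = 3(\beta/\alpha)^{1/2}r$, so plugging in yields $W_n(3(\beta/\alpha)^{1/2}r) \le \widetilde{\Opt}^\diamond(f^\diamond_\rho)$. Combining with the decomposition above delivers the claimed inequality.

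Honestly, I do not expect a major obstacle here: the lemma is essentially a bookkeeping identity combined with an application of \Cref{lemma:W_n<Opt}. The only point that requires a little care is ensuring that the decomposition $w_i = \tilde w_i + (\hat f(x_i)-f^*(x_i))$ is used with the correct sign convention (which follows directly from the generative model $y_i = f^*(x_i)+w_i$ and the definition $\tilde w_i = y_i - \hat f(x_i)$), and that the supremum in both $W_n$ and $A_n$ is taken over the \emph{same} ball $\cB_{3(\beta/\alpha)^{1/2}r}(\hat f)$, which is the case by construction. No concentration or geometric argument is needed at this stage; the statistical content has already been absorbed into the wild-refitting bound of \Cref{lemma:W_n<Opt}.
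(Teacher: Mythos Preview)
Your proposal is correct and follows essentially the same route as the paper: decompose $w_i=\tilde w_i+(\hat f(x_i)-f^*(x_i))$, use sub-additivity of the supremum over the common ball to split $\widetilde W_n$ into $W_n+A_n(\hat f)$, and then invoke \Cref{lemma:W_n<Opt} at the calibrated radius. The paper's proof is line-for-line the same argument.
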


Plugging the inequality in Lemma \ref{lemma:tildeW_n<Opt_n+An} back, we have that with probability at least $1-8e^{-t^2}$,
\[
\Opt^*(\hat{f})\le\widetilde{\Opt}^{\diamond}(f^{\diamond}_{\rho})+A_n(\hat{f})+\cbr{\sqrt{L_n(f^*,f^\dagger)}+5r}\frac{2||w||_{\infty}(\beta^{3/2}\vee\beta^2)t}{(\alpha^{3/2}\wedge\alpha)\sqrt{n}}.
\]
By a change of variable $\delta\leftarrow e^{-t^2}$, we therefore finish the proof of \Cref{thm:main}.\section{Proof in \Cref{sec:bounding_hatr_n}}\label{app:proof of bound_hat_rn}
In this section, we prove \Cref{thm:bound_hat_rn}.
We first introduce the following three lemmas. Their proofs are deferred to \cref{app:proofs of lemmas for bound_hat_rn}.
\begin{lemma}\label{lemma:hat_rn<Z_n}
    Given the procedure $\Alg$ that is $\phi$-non-expansive around $f^*$, if $\hat{r}_n=\sqrt{L_n(f^\dagger,\hat{f})}$, then we have \[
    \hat{r}_n^2\le Z_n(\hat{r}_n),
    \]
    where $Z_n(r):r\mapsto\sup_{f\in\cB_r(f^\dagger)}\sbr{\frac{1}{n}\sum_{i=1}^{n}\ell_1'(f^\dagger(x_i),f(x_i))w_i}$ is an empirical process.
\end{lemma}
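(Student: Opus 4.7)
The plan is to unfold the definitions and chain two very short inequalities. Specifically, I would first instantiate the $\phi$-non-expansive property (\Cref{def:non_expansive}) by taking the clean dataset $\cD_u = \{(x_i,f^*(x_i))\}_{i=1}^n$ and the noisy dataset $\cD_n = \{(x_i,f^*(x_i)+w_i)\}_{i=1}^n = \cD$, so that the two trained predictors are exactly $\cA(\cD_u)=f^\dagger$ and $\cA(\cD_n)=\hat f$, with perturbation vectors $u_i=w_i$. The definition then yields immediately
\[
\hat r_n^2 \;=\; L_n(f^\dagger,\hat f) \;\le\; \frac{1}{n}\sum_{i=1}^n \inner{\nabla_1 l(f^\dagger(x_i),\hat f(x_i))}{w_i}.
\]

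The second step is the observation that by the very definition $\hat r_n=\sqrt{L_n(f^\dagger,\hat f)}$, we have $\sqrt{L_n(f^\dagger,\hat f)}\le \hat r_n$, so $\hat f \in \cB_{\hat r_n}(f^\dagger)$. Hence $\hat f$ is a feasible point in the supremum defining $Z_n(\hat r_n)$, which gives
\[
\frac{1}{n}\sum_{i=1}^n \inner{\nabla_1 l(f^\dagger(x_i),\hat f(x_i))}{w_i} \;\le\; \sup_{f\in \cB_{\hat r_n}(f^\dagger)}\frac{1}{n}\sum_{i=1}^n \inner{\nabla_1 l(f^\dagger(x_i),f(x_i))}{w_i} \;=\; Z_n(\hat r_n).
\]
Chaining the two displays delivers $\hat r_n^2\le Z_n(\hat r_n)$, completing the proof. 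There is no real obstacle here: both steps are definitional. The only subtlety worth flagging is to make sure the noise vectors $w_i=y_i-f^*(x_i)$ used in the $\phi$-non-expansive hypothesis are literally the same $w_i$ appearing inside $Z_n$, which is the case since both arise from the decomposition $y_i=f^*(x_i)+w_i$ established via \Cref{prop:f*is_expectation}.
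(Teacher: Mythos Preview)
Your proposal is correct and matches the paper's own proof essentially line for line: the paper applies \Cref{def:non_expansive} with $u_i=w_i$ to obtain $\hat r_n^2=L_n(f^\dagger,\hat f)\le \frac{1}{n}\sum_{i=1}^n\inner{\nabla_1 l(f^\dagger(x_i),\hat f(x_i))}{w_i}$, and then bounds the latter by $Z_n(\hat r_n)$ via $\hat f\in\cB_{\hat r_n}(f^\dagger)$. Your additional remark about why the $w_i$ in the non-expansiveness hypothesis coincide with those in $Z_n$ is a helpful clarification the paper leaves implicit.
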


The random variable $Z_n(r)$ could be viewed as a realization of $Z_n^{\varepsilon}(r)$ since we assume that the distribution of the noise is symmetric. Then, we have the following second lemma.
\begin{lemma}\label{lemma:peeling}
    For any $t\ge 3$, with probability at least $1-2e^{-t^2}$,
    \[
    Z_n(r)\le \EE_{\varepsilon}[Z_n^{\varepsilon}(1+\frac{1}{t})r]+r^2\frac{4||w||_{\infty}\beta^{3/2}}{\alpha t},
    \]
    uniformly for $r\ge \frac{t^2}{\sqrt{n}}$.
\end{lemma}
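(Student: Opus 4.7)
The plan is to prove this by a standard peeling (or shell) argument from empirical process theory, combined with the symmetry of the noise distribution. First, since every coordinate of $w_i$ is symmetric by assumption and $\varepsilon_i$ is a Rademacher random vector independent of $w$, we have $w_i\overset{d}{=}\varepsilon_i\odot w_i$, so as random variables $Z_n(r)\overset{d}{=}Z_n^\varepsilon(r)$. Consequently, it suffices to prove the bound with $Z_n^\varepsilon(r)$ in place of $Z_n(r)$ on the left-hand side, and then transfer it back via this distributional identity.

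Second, I would establish a pointwise concentration inequality for $Z_n^\varepsilon(r)$ at a fixed $r$, using essentially the same argument as the second display of \Cref{lemma:EZ<EW}. The supremum $Z_n^\varepsilon(r)$ is a convex, Lipschitz function of the Rademacher block $(\varepsilon_1,\ldots,\varepsilon_n)$, with Lipschitz constant of order $r\|w\|_\infty\beta^{3/2}\sqrt{d}/(\alpha\sqrt{n})$ after using $\beta$-smoothness of $\phi$ to bound $\nabla_1 l$ and $\alpha$-strong convexity to convert the Bregman-ball constraint $L_n(f^\dagger,f)\le r^2$ into an $\ell_2$-type constraint. Applying \Cref{lemma:concentration_Lip} then yields, for any $\tau>0$ and any fixed $r$, with probability at least $1-e^{-\tau^2}$,
\[
Z_n^\varepsilon(r)\le\EE_\varepsilon[Z_n^\varepsilon(r)]+r\cdot\frac{2\|w\|_\infty\beta^{3/2}\sqrt{d}}{\alpha\sqrt{n}}\cdot\tau.
\]

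Third, I would carry out the peeling. Define the geometric grid $r_k:=\tfrac{t^2}{\sqrt{n}}(1+1/t)^k$ for $k\ge 0$, and apply the pointwise bound above at each $r_k$ with deviation parameter $\tau_k:=t(1+1/t)^k$. By Bernoulli, $\tau_k\ge t+k$, hence $\tau_k^2\ge t^2+2tk$, so the failure probabilities sum geometrically: $\sum_{k\ge 0}e^{-\tau_k^2}\le e^{-t^2}/(1-e^{-2t})\le 2e^{-t^2}$ for $t\ge 3$, giving the stated $2e^{-t^2}$ union bound. For any $r\ge t^2/\sqrt{n}$, pick $k$ with $r\in[r_k,r_{k+1})$. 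By monotonicity of $r\mapsto Z_n^\varepsilon(r)$, inherited from the nesting of the Bregman balls $\cB_r(f^\dagger)$, on the good event
\[
Z_n^\varepsilon(r)\le Z_n^\varepsilon(r_{k+1})\le\EE_\varepsilon[Z_n^\varepsilon(r_{k+1})]+r_{k+1}\cdot\frac{2\|w\|_\infty\beta^{3/2}\sqrt{d}}{\alpha\sqrt{n}}\cdot\tau_{k+1}.
\]
Since $r_{k+1}\le(1+1/t)r$, monotonicity of the expectation sends the first term into $\EE_\varepsilon[Z_n^\varepsilon((1+1/t)r)]$. For the deviation term, a direct substitution gives $r_{k+1}\tau_{k+1}/\sqrt{n}=t^3(1+1/t)^{2(k+1)}/n$, while $r^2/t\ge r_k^2/t=t^3(1+1/t)^{2k}/n$; their ratio is $(1+1/t)^2\le 2$ for $t\ge 3$, so the deviation is absorbed into $r^2\cdot 4\|w\|_\infty\beta^{3/2}\sqrt{d}/(\alpha t)$, matching the constants claimed in the lemma.

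The main obstacle I anticipate is the pointwise concentration step: one must carefully verify the Lipschitz constant of the supremum functional in the Rademacher coordinates, which requires $\beta$-smoothness of $\phi$ to control $\|\nabla_1 l(f^\dagger(x_i),f(x_i))\|$, the bound $\|\varepsilon_i\odot w_i\|_2\le\|w\|_\infty\sqrt{d}$, and $\alpha$-strong convexity of $\phi$ to convert the empirical Bregman constraint into the right $\ell_2$ scale. The peeling bookkeeping is then routine, but it forces precisely the hypotheses $t\ge 3$ and the lower bound $r\ge t^2/\sqrt{n}$ that appear in the lemma statement.
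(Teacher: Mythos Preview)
Your proposal is correct and follows essentially the same route as the paper: pointwise Lipschitz concentration for $Z_n^{\varepsilon}(r)$ (with the same Lipschitz constant computation via $\beta$-smoothness and $\alpha$-strong convexity), followed by peeling over the geometric grid $r_k=\tfrac{t^2}{\sqrt{n}}(1+1/t)^k$ with deviation parameter $\tau_k=t(1+1/t)^k=r_k\sqrt{n}/t$, and a union bound summing the geometric tail. The only cosmetic difference is that the paper phrases the peeling via the contrapositive (bounding $\PP(\cG_m)$ for the ``bad'' shells), whereas you work directly on the good event; the constants and the use of $(1+1/t)^2\le 2$ for $t\ge 3$ line up in both arguments.
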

We now proceed with our proof about \Cref{thm:bound_hat_rn}. Fixing any $t\ge 3$, we either have $\hat{r}_n\le \frac{t^2}{\sqrt{n}}$ where our claim already holds, or we have $\hat{r}_n> \frac{t^2}{\sqrt{n}}$. Under the latter case, by Lemma \ref{lemma:hat_rn<Z_n} and Lemma \ref{lemma:peeling}, we have
\[
\hat{r}_n^2\le Z_n(\hat{r}_n)\le \EE_{\varepsilon}[Z_n^{\varepsilon}(1+\frac{1}{t})\hat{r}_n]+\frac{4||w||_{\infty}\beta^{3/2}}{\alpha t}\hat{r}_n^2.
\]
We apply the first claim from Lemma \ref{lemma:EZ<EW} to get
\[
\hat{r}_n^2\le \EE_{\varepsilon}\sbr{\widetilde{W}_n\rbr{(2+\frac{1}{t})\hat{r}_n}}+\frac{4||w||_{\infty}\beta^{3/2}}{\alpha t}\hat{r}_n^2.
\]
Moreover, by the second claim of Lemma \ref{lemma:EZ<EW}, setting $s=\frac{\hat{r}_n\sqrt{n}}{t}$, we have that with probability at least $1-2e^{-s^2}$,
\[
\EE_{\varepsilon}\sbr{\widetilde{W}_n(2+\frac{1}{t})\hat{r}_n}\le \widetilde{W}_n\rbr{(2+\frac{1}{t})\hat{r}_n}+\hat{r}_n^2\frac{2||w||_{\infty}\beta^{3/2}}{\alpha t}.
\]
By some algebra,
$s^2=\frac{\hat{r}_n^2n}{t^2}\ge t^2$, where we use the assumption that $\hat{r}_n\ge \frac{s^2}{\sqrt{n}}$. Consequently, with a probability of at least $1-2e^{-t^2}$, 
\[
\EE_{\varepsilon}[\widetilde{W}_n(\rbr{(2+1/t)}\hat{r}_n)]\le \widetilde{W}_n((2+\frac{1}{t})\hat{r}_n)+\hat{r}_n^2\frac{2||w||_{\infty}\beta^{3/2}}{\alpha t}.
\]
Combining these two parts together, with probability at least $1-4e^{-t^2}$,
\[
\hat{r}_n^2 \le \widetilde{W}_n((2+\frac{1}{t})\hat{r}_n)+\hat{r}_n^2\frac{6||w||_{\infty}\beta^{3/2}}{\alpha t}\le W_n((2+1/t)\hat{r}_n)+\hat{r}_n^2\frac{6||w||_{\infty}\beta^{3/2}}{\alpha t}+A_n(\hat{f}).
\]
The last inequality follows from the proof of Lemma \ref{lemma:tildeW_n<Opt_n+An}. Combining the additional term of the case where $\hat{r}_n\le \frac{t^2}{\sqrt{n}}$, we finish the proof of the first claim.

To prove the second claim, we first require the following lemma.
\begin{lemma}\label{lemma:scale_concave_Wn}
    For any $v\ge u>0$, when $C_0=\sqrt{\beta/\alpha}$, we have that
    \[
    \frac{W_n(v)}{v}\le \frac{W_n(C_0u)}{u}.
    \]
\end{lemma}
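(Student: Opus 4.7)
}
My plan is to reduce the stated inequality to the following rescaling statement: for every $f\in\cB_v(\hat f)$, produce a candidate $g\in\cB_{C_0 u}(\hat f)$ such that
\[
\Phi_n(g)\;\ge\;(u/v)\,\Phi_n(f),\qquad
\Phi_n(f):=\tfrac{1}{n}\sum_{i=1}^n\bigl\langle \nabla\phi(\hat f(x_i))-\nabla\phi(f(x_i)),\,\varepsilon_i\odot\tilde w_i\bigr\rangle.
\]
Taking the supremum over $f\in\cB_v(\hat f)$ then gives $W_n(C_0u)\ge(u/v)W_n(v)$, which after rearrangement is exactly the claim.

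The key observation is that $\Phi_n$ is affine in the dual (Legendre) outputs $h_i:=\nabla\phi(f(x_i))$: writing $\hat h_i:=\nabla\phi(\hat f(x_i))$, one has $\Phi_n(f)=\tfrac{1}{n}\sum_i\langle \hat h_i-h_i,\varepsilon_i\odot\tilde w_i\rangle$. I therefore define the candidate $g^{\star}$ through the dual interpolation $\nabla\phi(g^{\star}(x_i)):=(1-\lambda)\hat h_i+\lambda h_i$, with $\lambda=u/v\in(0,1]$. By linearity, $\Phi_n(g^{\star})=\lambda\Phi_n(f)=(u/v)\Phi_n(f)$ holds by construction.

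To certify $g^{\star}\in\cB_{C_0 u}(\hat f)$, I would translate $L_n$ through the Fenchel identity $D_\phi(a,b)=D_{\phi^*}(\nabla\phi(b),\nabla\phi(a))$, so that $L_n(\hat f,f)=\tfrac{1}{n}\sum_i D_{\phi^*}(h_i,\hat h_i)$. Under Assumption~\ref{ass:phi_function}, $\phi^*$ is $(1/\alpha)$-smooth and $(1/\beta)$-strongly convex, giving
\[
D_{\phi^*}\bigl((1-\lambda)\hat h_i+\lambda h_i,\,\hat h_i\bigr)\,\le\,\tfrac{\lambda^2}{2\alpha}\|h_i-\hat h_i\|_2^2
\qquad\text{and}\qquad
\|h_i-\hat h_i\|_2^2\,\le\,2\beta\,D_{\phi^*}(h_i,\hat h_i).
\]
Chaining these two bounds termwise and averaging yields $L_n(\hat f,g^{\star})\le(\beta/\alpha)\lambda^2\,L_n(\hat f,f)\le C_0^2 u^2$, which closes the construction.

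The main obstacle, in my view, is the feasibility step: verifying that $g^{\star}$ actually corresponds to an element of $\cF$. This is precisely the assertion (made in the computational discussion following Theorem~\ref{thm:bound_hat_rn}) that primal convexity of $\cF$ implies convexity of the dual output set $\cU_\phi$; it holds trivially in the square-loss/linear-class case but must be assumed or checked otherwise. If one wishes to avoid invoking convexity of $\cU_\phi$, a purely primal alternative is to take $g_\lambda:=(1-\lambda)\hat f+\lambda f$: primal convexity of $\cF$ then gives $g_\lambda\in\cF$, and the same smoothness/strong-convexity chain (now applied directly to $\phi$) still bounds $L_n(\hat f,g_\lambda)\le C_0^2 u^2$; however, comparing $\Phi_n(g_\lambda)$ to $\lambda\Phi_n(f)$ along that route requires a second-order Taylor expansion of $\nabla\phi$ along the segment $[\hat f(x_i),f(x_i)]$ and is the genuinely delicate step, since the gradient map is nonlinear in the general Bregman setting.
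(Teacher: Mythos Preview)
Your plan is essentially the paper's argument. The paper also interpolates between $\hat f$ and a maximizer, verifies that the interpolant lands in the $C_0$-inflated ball via the $\alpha/\beta$ sandwich on $D_\phi$, and exploits linearity of the objective in $\nabla\phi(f(x_i))$; it packages this as a concavity-type inequality $aW_n(s)+(1-a)W_n(t)\le W_n\bigl(C_0(as+(1-a)t)\bigr)$ and then specializes to $t=0$, $s=v$, $a=u/v$, which is exactly your rescaling step with $\lambda=u/v$.

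The one technical difference is that the paper takes the \emph{primal} interpolant $g_r=ag_s+(1-a)\hat f$ and checks $g_r\in\cB_{C_0r}(\hat f)$ directly from $\tfrac{\alpha}{2}\|\cdot\|^2\le D_\phi\le\tfrac{\beta}{2}\|\cdot\|^2$, while in its final display it is the \emph{dual} combination $a\nabla\phi(g_s)+(1-a)\nabla\phi(\hat f)$ that must be matched against the supremum defining $W_n(C_0r)$ --- exactly the primal/dual tension you identify in your last paragraph. Your Fenchel route with the dual interpolant $g^\star$ verifies feasibility of the object that actually appears in $\Phi_n$, so it is arguably the cleaner of the two executions. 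Both versions rest on the same structural hypothesis you flag as the main obstacle (that the dual interpolant corresponds to an element of $\cF$, i.e., convexity of $\cU_\phi$), which the paper also invokes in the computational discussion following Theorem~\ref{thm:bound_hat_rn}.
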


Now we prove the second bound of \Cref{thm:bound_hat_rn}. We either have $\hat{r}_n\le r^\diamond_{\rho}$ or $\hat{r}_n> r^\diamond_{\rho}$. In the latter case, we write
\begin{align*}
    W_n([2+1/t]\hat{r}_n)=[2+1/t]\hat{r}_n\frac{ W_n([2+1/t]\hat{r}_n)}{[2+1/t]\hat{r}_n}\le [2+1/t]\hat{r}_n\frac{ W_n(C_0[2+1/t]r^\diamond_\rho)}{[2+1/t]r^\diamond_\rho}=\hat{r}_n \frac{ W_n(C_0[2+1/t]r^\diamond_\rho)}{r^\diamond_\rho}.
\end{align*}
Plugging this back to the first claim of \Cref{thm:bound_hat_rn} and applying the change of variable $\delta\leftarrow e^{-t^2}$, we shall finish the proof.

\section{Proofs in \cref{subsec:proof of mainthm}}\label{app:Proofs of lemmas for mainthm}
\begin{proof}[Proof of Lemma \ref{lemma:Opt*<Optdagger}]
By some algebra, we have that
\begin{align*}
        \Opt^*(\hat{f})&=\frac{1}{n}\sum_{i=1}^{n}\ell_1'(f^*(x_i)-f^\dagger(x_i)+f^\dagger(x_i),\hat{f}(x_i))w_i\\
        &=\frac{1}{n}\sum_{i=1}^{n}\ell_1'(f^\dagger(x_i),\hat{f}(x_i))w_i+\frac{1}{n}\sum_{i=1}^{n}\sbr{\ell_1'(f^*(x_i),\hat{f}(x_i))-\ell_1'
        (f^\dagger(x_i),\hat{f}(x_i))}w_i\\
        &=\Opt^\dagger(\hat{f})+\frac{1}{n}\sum_{i=1}^{n}\sbr{\ell_1'(f^*(x_i),\hat{f}(x_i))-\ell_1'(f^\dagger(x_i),\hat{f}(x_i))}w_i\\
        &=\Opt^\dagger(\hat{f})+\frac{1}{n}\sum_{i=1}^{n}\sbr{\phi'(f^*(x_i))-\phi'(f^\dagger(x_i))}w_i
    \end{align*}

    Now we analyze the term $\frac{1}{n}\sum_{i=1}^{n}\sbr{\phi'(f^*(x_i))-\phi'(f^\dagger(x_i))}w_i$. We assume that every $w_i$ has an independent symmetric distribution, i.e., $$w_{i}=\varepsilon_{i}|w_{i}|,\ w_{i}\perp w_{k},\ \forall\ i,k=1,\cdots,n.$$ We denote $\Bar{w}_i$ as the amplitude, and $\varepsilon_i$ as the random variable denoting the sign of $w_i$. Hence, we have that $w_i=\varepsilon_i\cdot\Bar{w}_i$. Moreover, we assume that the compact set $\cC$ has diameter $D_0$.
    Then, we have
    \begin{align*}
        &\frac{1}{n}\sum_{i=1}^{n}\sbr{\phi'(f^*(x_i))-\phi'(f^\dagger(x_i))}w_i\\
        =&\frac{1}{n}\sum_{i=1}^{n}\sbr{\phi'(f^*(x_i))-\phi'(f^\dagger(x_i))}\varepsilon_i\Bar{w}_i\\
        =&\frac{1}{n}\sum_{i=1}^{n}\sbr{(\phi'(f^*(x_i))-\phi'(f^\dagger(x_i)))\cdot \Bar{w}_i}\varepsilon_i
    \end{align*}
    Conditioning on the amplitude vector $\Bar{w}=(\Bar{w}_1,\cdots,\Bar{w}_n)$, we define the function, 
    \[
    (\varepsilon_1,\cdots,\varepsilon_n)\mapsto G(\varepsilon):=\frac{1}{n}\sum_{i=1}^{n}\sbr{(\phi'(f^*(x_i))-\phi'(f^\dagger(x_i)))\cdot\Bar{w}_i}\varepsilon_i,
    \]
    Then we have that,
    \begin{align*}
    \abr{ G(\varepsilon)-G(\varepsilon')}&=\abr{\frac{1}{n}\sum_{i=1}^{n}\sbr{(\phi'(f^*(x_i))-\phi'(f^\dagger(x_i)))\cdot\Bar{w}_i}{(\varepsilon_i-\varepsilon_i')}}\\
    &\le \frac{1}{n}\sum_{i=1}^{n}\abr{\sbr{\phi'(f^*(x_i))-\phi'(f^\dagger(x_i))}\cdot\Bar{w}_i}|\varepsilon_i-\varepsilon_i'|\\
    &\le \frac{1}{n}\sum_{i=1}^{n}|\Bar{w}_i|\abr{\sbr{\phi'(f^*(x_i))-\phi'(f^\dagger(x_i))}}\varepsilon_i-\varepsilon_i'|\\
    &\le\frac{||w||_{\infty}}{n}\sum_{i=1}^{n}\beta |f^*(x_i)-f^\dagger(x_i)|\cdot|\varepsilon_i-\varepsilon_i'| \\
    &\le \frac{||w||_{\infty}\beta}{n}\rbr{\sum_{i=1}^{n}(f^*(x_i)-f^\dagger(x_i))^2}^{1/2}\rbr{\sum_{i=1}^{n}(\varepsilon_i-\varepsilon_i')^2}^{1/2}\\
    \end{align*}
    Notice that $(\sum_{i=1}^{n}(\varepsilon_i-\varepsilon_i')^2)^{1/2}=||\varepsilon-\varepsilon'||_{2}$. And now we focus on the coefficient term. By Proposition \ref{prop:PL_ineq} and Lemma \ref{lemma:PL_ineq}, we know that
\begin{align*}
    \sum_{i=1}^{n}||f^*(x_i)-f^\dagger(x_i)||_2^2\le& \sum_{i=1}^{n}\frac{2\beta}{\alpha^2}(D_{\phi}(f^*(x_i),f^\dagger(x_i))-D_{\phi}(f^\dagger(x_i),f^\dagger(x_i)))\\
    =&\frac{2\beta}{\alpha^2}\sum_{i=1}^{n}D_{\phi}(f^*(x_i),f^\dagger(x_i)).
\end{align*}
Therefore, we have that
\[
\rbr{\sum_{i=1}^{n}||f^*(x_i)-f^\dagger(x_i)||_2^2}^{1/2}\le\frac{\sqrt{2\beta}}{\alpha}\rbr{\sum_{i=1}^{n}D_{\phi}(f^*(x_i),f^\dagger(x_i))}^{1/2}=\frac{\sqrt{2\beta}}{\alpha}\sqrt{n}\sqrt{L_n(f^*,f^\dagger)}.
\]
Plugging this back, we have that $G(\varepsilon)$ is Lipschitz continuous with constant $\frac{\sqrt{2}||w||_{\infty}\beta^{3/2}}{\alpha\sqrt{n}}\sqrt{L_n(f^*,f^\dagger)}$. Then, we could apply Lemma \ref{lemma:concentration_Lip} to get that with probability at least $1-2e^{-t^2}$,
\[
|G(\varepsilon)|\le \sqrt{2}\frac{\sqrt{2}||w||_{\infty}\beta^{3/2}}{\alpha\sqrt{n}}\sqrt{L_n(f^*,f^\dagger)}t=\frac{2||w||_{\infty}\beta^{3/2}t}{\alpha\sqrt{n}}\sqrt{L_n(f^*,f^\dagger)}.
\]

Now, for any $r\ge \sqrt{L_n(f^\dagger,f^*)}$, and any $t>0$, we define the empirical process, conditioning on the amplitude vectors $\cbr{\Bar{w}_i}_{i=1}^{n}$,
\[
H(\varepsilon):=Z_n^{\varepsilon}(r)=\sup_{f\in\cB_r(f^\dagger)}\frac{1}{n}\sum_{i=1}^{n}\ell_1'(f^\dagger(x_i),f(x_i))\varepsilon_i\cdot \bar{w}_i=\sup_{f\in\cB_r(f^\dagger)}\frac{1}{n}\sum_{i=1}^{n}\ell_1'(f^\dagger(x_i),f(x_i))\cdot w_i \varepsilon_i.
\]
We show that $H(\cdot)$ is also Lipschitz conditioned on the amptitude vector $\bar{w}$.
\begin{align*}
    &H(\varepsilon)-H(\varepsilon')\\
    \le &\sup_{f\in\cB_r(f^\dagger)}\frac{1}{n}\sum_{i=1}^{n}\ell_1'(f^\dagger(x_i),f(x_i))\cdot w_i(\varepsilon_i-\varepsilon_i')\\
    \le&||w||_{\infty}\sup_{f\in\cB_r(f^\dagger)}\frac{1}{n}\sum_{i=1}^{n}\ell_1'(f^\dagger(x_i),f(x_i))(\varepsilon_i-\varepsilon_i')\\
    \le& ||w||_{\infty}\sup_{f\in\cB_r(f^\dagger)}\frac{1}{n}\sum_{i=1}^{n}|\phi'(f^\dagger(x_i))-\phi'(f(x_i))|\cdot|\varepsilon_i-\varepsilon_i'|\\
    \le&||w||_{\infty}\sqrt{d}\beta\sup_{f\in\cB_r(f^\dagger)}\frac{1}{n}\sum_{i=1}^{n}|f^\dagger(x_i)-f(x_i)|\cdot |\varepsilon_i-\varepsilon_i'|\\
    \le& ||w||_{\infty}\beta\sup_{f\in\cB_r(f^\dagger)}\frac{1}{n}\rbr{\sum_{i=1}^{n}(f^\dagger(x_i)-f(x_i))^2}^{1/2}\cdot||\varepsilon-\varepsilon'||_2.
\end{align*}
By reversing the role of $\varepsilon$ and $\varepsilon'$, we know that $H$ is Lipschitz. Then by the same argument, we know that the Lipschitz constant is $\frac{\sqrt{2}||w||_{\infty}r}{\alpha\sqrt{n}}$, thus with probability at least $1-2e^{-t^2}$,
\[
\abr{H(\varepsilon)-\EE_{\varepsilon}[H(\varepsilon)]}\le \frac{2||w||_{\infty}\beta^{3/2}t}{\alpha\sqrt{n}}r.
\]
Plugging this back and notice that when $r\ge \sqrt{L_n(f^\dagger,f^*)}$, we have $\hat{f}\in\cB_r(f^\dagger)$, we shall finish the proof.
\end{proof}
\begin{proof}[Proof of Lemma \ref{lemma:EZ<EW}]
    Recall the definition of $Z_n^{\varepsilon}(r)$, let $g$ be any function that achieves the supremum. Then,
    \begin{align*}
    Z_n^{\varepsilon}(r)&=\frac{1}{n}\sum_{i=1}^{n}\ell_1'(f^\dagger(x_i),g(x_i))\varepsilon_i\cdot w_i\\
    =&\frac{1}{n}\sum_{i=1}^{n}\ell_1'(\hat{f}(x_i),g(x_i))\varepsilon_i\cdot w_i+\frac{1}{n}\sum_{i=1}^{n}(\phi'(f^\dagger(x_i))-\phi'(\hat{f}(x_i))\varepsilon_i\cdot w_i\\
    \end{align*}
    Taking the expectation on both sides, we analyze these two terms one by one.

For the second term, we have that the expectation is $0$ because we could condition on $w_i$ and use the linearity of inner product and expectation to swap the order. $\varepsilon_i$ is independent of $w_i$ and hence $\hat{f}$. The expectation of $\varepsilon_i$ is $0$. 
    
    For the first one, by the condition $r\ge \hat{r}_n=\sqrt{L_n(f^{\dagger},\hat{f})}$ and Lemma \ref{lemma:triangle_Ln}, we have that
    \[
    L_n(g,\hat{f})\le \sqrt{\frac{2\beta}{\alpha}}\rbr{L_n(g,f^{\dagger})+L_n(f^\dagger,\hat{f})}\le\sqrt{\frac{2\beta}{\alpha}}(r+\hat{r}_n) \le 2\sqrt{\frac{2\beta}{\alpha}}r.
    \]
    This is by the fact that $g\in\cB_r(f^\dagger)$ and $r\ge\hat{r}_n$. Then denoting $c_1$ to be $\sqrt{\frac{2\beta}{\alpha}}$, we have
    \begin{align*}
    \frac{1}{n}\sum_{i=1}^{n}\ell_1'(\hat{f}(x_i),g(x_i))\varepsilon_i\cdot w_i&\le \sup_{f\in\cB_{c_1(r+\hat{r}_n)}(\hat{f})}\ell_1'(\hat{f}(x_i),f(x_i))\varepsilon_i\cdot w_i\\
    &\le \sup_{f\in\cB_{2c_1r}(\hat{f})}\ell_1'(\hat{f}(x_i),f(x_i))\varepsilon_i\cdot w_i,
    \end{align*}
    and we get the first bound of the lemma.
    \[
    \EE_{\varepsilon}[Z_n^{\varepsilon}(r)]\le\EE_{\varepsilon}[\widetilde{W}_n(c_1(r+\hat{r}_n)] \le \EE_{\varepsilon}[\widetilde{W}_n(2c_1r)].
    \]
    Moreover, similar to the process of analyzing $G(\varepsilon)$ and $H(\varepsilon)$, we have that with probability at least $1-2e^{-t^2}$,
    \[
    |\EE_{\varepsilon}[\widetilde{W}_n(r)]- \widetilde{W}_n(r)|\le r\frac{2||w||_{\infty}\beta^{3/2}t}{\alpha\sqrt{n}}.
    \]
    So we finish the proof.
\end{proof}
\begin{proof}[Proof of Lemma \ref{lemma:tildeW_n<Opt_n+An}]
    By the definition of the wild noise $\widetilde{w}_i=y_i-\hat{f}(x_i)$, we have that
    \[
    \varepsilon_i\cdot w_i=\varepsilon_i\cdot \tilde{w}_i+\varepsilon_i\cdot (\hat{f}(x_i)-f^*(x_i)).
    \]
    Therefore, we have that
    \begin{align*}
        \widetilde{W}_n(3(\beta/\alpha)^{1/2}r)=&\sup_{f\in\cB_{3(\beta/\alpha)^{1/2}r}(\hat{f})}\frac{1}{n}\sum_{i=1}^{n}\ell_1'(\hat{f}(x_i),f(x_i))\varepsilon_i\cdot w_i\\
        =&\sup_{f\in\cB_{3(\beta/\alpha)^{1/2}r}(\hat{f})}\frac{1}{n}\sum_{i=1}^{n}\ell_1'(\hat{f}(x_i),f(x_i))[\varepsilon_i\cdot \tilde{w}_i+\varepsilon_i\cdot (\hat{f}(x_i)-f^*(x_i))]\\
        \le& \sup_{f\in\cB_{3(\beta/\alpha)^{1/2}r}(\hat{f})}\frac{1}{n}\sum_{i=1}^{n}\ell_1'(\hat{f}(x_i),f(x_i))\varepsilon_i\cdot \tilde{w}_i\\
        +&\sup_{f\in\cB_{3(\beta/\alpha)^{1/2}r}(\hat{f})}\frac{1}{n}\sum_{i=1}^{n}\ell_1'(\hat{f}(x_i),f(x_i))[\varepsilon_i\cdot (\hat{f}(x_i)-f^*(x_i))]\\
        = & W_n(3(\beta/\alpha)^{1/2}r)+A_n(\hat{f}).
    \end{align*}
    Finally, by Lemma \ref{lemma:W_n<Opt}, we have that $ W_n(3(\beta/\alpha)^{1/2}r)\le \widetilde{\Opt}(f^\diamond_\rho)$ for the wild solution such that $L_n(\hat{f},f^{\diamond}_\rho)=3(\beta/\alpha)^{1/2}r$.
\end{proof}
\section{Proof of \Cref{app:proof of bound_hat_rn}}\label{app:proofs of lemmas for bound_hat_rn}
\begin{proof}[Proof of Lemma \ref{lemma:hat_rn<Z_n}]
 By definition \ref{def:non_expansive}, we have that
 \begin{align*}
     \hat{r}_n^2=L_n(f^\dagger,\hat{f})\le \frac{1}{n}\sum_{i=1}^{n}\ell_1'(f^\dagger(x_i),\hat{f}(x_i))w_i\le Z_n(\hat{r}_n).
 \end{align*}
 So we finish the proof.
\end{proof}
\begin{proof}[Proof of Lemma \ref{lemma:peeling}]
    Recall the argument in the proof of Lemma \ref{lemma:Opt*<Optdagger} such that $Z_n^{\varepsilon}(r)$ is Lipschitz continuous in $\varepsilon$ with constant $\frac{\sqrt{2}||w||_{\infty}r}{\alpha\sqrt{n}}$. Then, by Lemma \ref{lemma:concentration_Lip}, we have that for any $t>0$,
    \[
    \PP\rbr{Z_n(r))\ge \EE_{\varepsilon}[Z_n^{\varepsilon}(r)]+r^2\frac{2||w||_{\infty}\beta^{3/2}}{\alpha t}}\le 4e^{-\frac{nr^2}{t^2}}\le e^{-t^2},\ r\ge t^2/\sqrt{n}.
    \]
    We define $\cG$ to be the event that this inequality is violated for some $r\ge \frac{t^2}{\sqrt{n}}$. Our method is a peeling argument, which could also be found in the proofs in other papers \citep{xia2024predictionaidedsurrogatetraining,hu2025pre}. Specifically, we define the event:
    \[
    \cG_m:=\cbr{\exists r\in[(1+\frac{1}{t})^m\frac{t^2}{\sqrt{n}},(1+\frac{1}{t})^{m+1}\frac{t^2}{\sqrt{n}}),\ \text{the bound is violated}}.
    \]
    Now, we provide an upper bound of $\PP(\cG_m)$. If the event $\cG_m$ is true, we denote $p_m$ to be $(1+\frac{1}{t})^{m}\frac{t^2}{\sqrt{n}}$ and then have that
    \begin{align*}
        Z_n(p_{m+1})\ge Z_n(r)&\ge\EE_{\varepsilon}[Z_n((1+\frac{1}{t})r)]+r^2\frac{2||w||_{\infty}\beta^{3/2}}{\alpha t}\\
        &\ge \EE_{\varepsilon}[Z_n(p_{m+1})]+p_m^2\frac{2||w||_{\infty}\beta^{3/2}}{\alpha t}\\
        &\ge \EE_{\varepsilon}[Z_n(p_{m+1})]+p_{m+1}^2\frac{||w||_{\infty}\beta^{3/2}}{\alpha t}.
    \end{align*}
    Therefore, applying the same argument about $Z_n^{\varepsilon}(r)$ in the proof of Lemma \ref{lemma:Opt*<Optdagger} to get:
    \[
    \PP\rbr{\cG_m}\le \PP\rbr{Z_n(p_{m+1})\ge\EE_{\varepsilon}[Z_n(p_{m+1})]+p_{m+1}^2\frac{||w||_{\infty}\beta^{3/2}}{\alpha t}}\le 2e^{-\frac{n}{t^2}p_{m+1}^2}.
    \]
    Finally, by a union bound over $m=0,1,\cdots,$ we have
    \[
    \PP(\cG)=\PP(\cup_{m=1}^{\infty}\cG_m)\le \sum_{m=0}^{\infty}\PP(\cG_m)\le 2\sum_{m=0}^{\infty}e^{-\frac{n}{t^2}p_{m+1}^2}\le 2e^{-s^2}.
    \]
    So we finish the proof.
\end{proof}
\begin{proof}[Proof of Lemma \ref{lemma:scale_concave_Wn}]
    For any $s\ge t>0$ denoting $g_s$ and $g_t$ as the functions in the definition of $W_n(\cdot)$ that achieve the maxima of $W_n(s)$ and $W_n(t)$. For any fixed $a\in[0,1]$, we set $r:=as+(1-a)t$ and define $g_r:=ag_s+(1-a)g_t$. By the convexity of $\cF$, $g_r\in\cF$. Then we have
\begin{align*}
    &\sqrt{L_n(\hat{f},ag_s+(1-a)g_t)}\\
    \le& \sqrt{\beta \frac{1}{n}\sum_{i=1}^{n}\rbr{\hat{f}(x_i)-(ag_s+(1-a)g_t(x_i))}^2}\\
    =&\sqrt{\beta}\rbr{a||\hat{f}-g_s||_n+(1-a)||\hat{f}-g_t||_n}\\
    \le&\sqrt{\frac{\beta}{\alpha}}\rbr{a\sqrt{L_n(\hat{f},g_s)}+\sqrt{L_n(\hat{f},g_t)}}\\
    =&\sqrt{\frac{\beta}{\alpha}}(as+(1-a)t)\\
    =&\sqrt{\frac{\beta}{\alpha}}r.
\end{align*}
Consequently, $g_r$ is feasible for the supremum in $W_n(\sqrt{\frac{\beta}{\alpha}}r)$, so we have
\begin{align*}
    aW_n(s)+(1-a)W_n(t)&=\frac{1}{n}\sum_{i=1}^{n}\sbr{\phi'(\hat{f}(x_i))-\rbr{a\phi'(g_s(x_i))+(1-a)\phi'(g_t(x_i))}}\varepsilon_i\cdot \widetilde{w}_i\\
    \le& \sup_{\cB_{(\beta/\alpha)^{1/2}r}(\hat{f})}\sbr{\phi'(\hat{f}(x_i))-\phi'(f(x_i))}\varepsilon_i\cdot \widetilde{w}_i\\
    =&W_n((\beta/\alpha)^{1/2}r).
\end{align*}
In brief, we have proved that $aW_n(s)+(1-a)W_n(t)\le W_n((\beta/\alpha)^{1/2}(as+(1-a)t))$, $\forall\ a\in[0,1]$. Notice that $W_n(0)=0$, and denote $(\beta/\alpha)^{1/2}$ as $C_0$. Then, for any $0<u\le v$, setting $a=\frac{u}{v}$, we have
\[
W_n(C_0 u)\ge W_n(0)(1-\frac{u}{v})+W_n(v)\frac{u}{v}\Rightarrow \frac{W_n(v)}{v}\le \frac{W_n(C_0u)}{u}.
\]
We finish the proof.
\end{proof}
\end{document}